\documentclass[11pt]{article}

\usepackage[preprint]{acl}

\usepackage{times}
\usepackage{latexsym}

\usepackage[T1]{fontenc}

\usepackage[utf8]{inputenc}

\usepackage{microtype}
\usepackage{inconsolata}

\usepackage[utf8]{inputenc} 
\usepackage[T1]{fontenc}    
\usepackage{hyperref}       
\usepackage{url}            
\usepackage{booktabs}       
\usepackage{amsfonts}       
\usepackage{nicefrac}       
\usepackage{microtype}      
\usepackage{xcolor}         
\usepackage{amsmath}
\usepackage{amssymb}
\usepackage{mathtools}
\usepackage{amsthm}
\usepackage{multirow}
\usepackage{graphicx}
\usepackage[most]{tcolorbox}
\usepackage{listings}
\usepackage{enumitem}
\usepackage{wrapfig}
\usepackage{algorithm}
\usepackage{algorithmic}
\usepackage{textcomp}

\usepackage{graphicx}

\usepackage{amsmath}
\usepackage{amssymb}
\usepackage{amsthm}

\theoremstyle{plain}
\newtheorem{theorem}{Theorem}
\newtheorem{proposition}[theorem]{Proposition}

\theoremstyle{definition}
\newtheorem{definition}[theorem]{Definition}
\newtheorem{assumption}[theorem]{Assumption}
\theoremstyle{remark}

\title{Share More, Search Less: Collaborative Parallel Thinking for Efficient Test-Time Scaling}
\begin{document}

\renewcommand{\thefootnote}{\fnsymbol{footnote}}
\author{
    \textbf{Xinglin Wang}\textsuperscript{\rm 1}\footnotemark[1], \hspace{0cm}
    \textbf{Hao Lin}\textsuperscript{\rm 1}\footnotemark[1], \hspace{0cm}
    \textbf{Shaoxiong Feng}\textsuperscript{\rm 2}\footnotemark[2], \hspace{0cm}
    \textbf{Peiwen Yuan}\textsuperscript{\rm 1}, \hspace{0cm}
    \textbf{Yiwei Li}\textsuperscript{\rm 1}, \hspace{0cm}
    \textbf{Jiayi Shi}\textsuperscript{\rm 1}, \hspace{0cm} \\
    \textbf{Yueqi Zhang}\textsuperscript{\rm 1}, \hspace{0cm}
    \textbf{Chuyi Tan}\textsuperscript{\rm 1}, \hspace{0cm}
    \textbf{Ji Zhang}\textsuperscript{\rm 1}, \hspace{0cm}
    \textbf{Boyuan Pan}\textsuperscript{\rm 2}, \hspace{0cm}
    \textbf{Yao Hu}\textsuperscript{\rm 2}, \hspace{0cm}
    \textbf{Kan Li}\textsuperscript{\rm 1}\footnotemark[2] \\
    \textsuperscript{\rm 1} School of Computer Science, Beijing Institute of Technology \\
    \textsuperscript{\rm 2} Xiaohongshu Inc \\
    \texttt{\{wangxinglin,linhao,peiwenyuan,liyiwei\}@bit.edu.cn} \\
    \texttt{\{shijiayi, zhangyq,tanchuyi,zhangji,likan\}@bit.edu.cn} \\
    \texttt{shaoxiongfeng2023@gmail.com} \quad
    \texttt{\{panboyuan,xiahou\}@xiaohongshu.com}
}
\maketitle

\footnotetext[1]{Equal contribution.}
\footnotetext[2]{Corresponding author.}

\renewcommand{\thefootnote}{\arabic{footnote}}

\begin{abstract}
Test-Time Scaling (TTS) enhances the reasoning capabilities of large language models by allocating additional inference compute to explore the solution space. However, existing parallel TTS methods typically keep branches isolated during search: intermediate discoveries remain branch-private and cannot guide other branches in time. This information isolation causes substantial redundant exploration, as branches repeatedly rediscover information already found elsewhere and require more search steps to collect complete decision information needed to reach correct answers. To bridge this gap, we propose \textbf{Collaborative Parallel Thinking (CPT)}, a training-free inference framework that enables search-time information sharing across parallel branches. CPT extracts compact intermediate information from ongoing branches, maintains a deduplicated query-level information pool, and broadcasts pool entries through the input context, allowing each branch in subsequent search steps to reuse discoveries made by other branches rather than rediscover the same information. Empirically, experiments on HMMT and AIME benchmarks show that CPT establishes a stronger accuracy--latency Pareto frontier than strong baselines across rollout budgets and model scales, highlighting search-time collaboration as an effective direction for efficient parallel TTS\footnote{Our code and data have been released on \url{https://github.com/WangXinglin/CPT}.}.
\end{abstract}


\section{Introduction}

Test-time scaling (TTS) has emerged as a powerful paradigm for improving the reasoning capabilities of large language models~\citep{brown2024large,snell2024scaling,wu2025inference,zhang2025survey,liu2025can}.
By allocating additional computation at inference time, TTS enables extended exploration of the solution space, allowing models to gather more decision-relevant information, reduce uncertainty about the final answer, and thereby improve performance on complex tasks such as mathematical reasoning~\citep{o1, Kimi-k1.5,DeepSeek-R1}.
Rather than concentrating exploration in a single trajectory, parallel TTS generates multiple reasoning branches for the same problem simultaneously, accelerating solution-space exploration~\citep{self-consistency, lightman2023let, baseline-deep-c, zheng2026parallel,zheng2026llms}.

\begin{figure}[t]
\centering
\includegraphics[width=\linewidth]{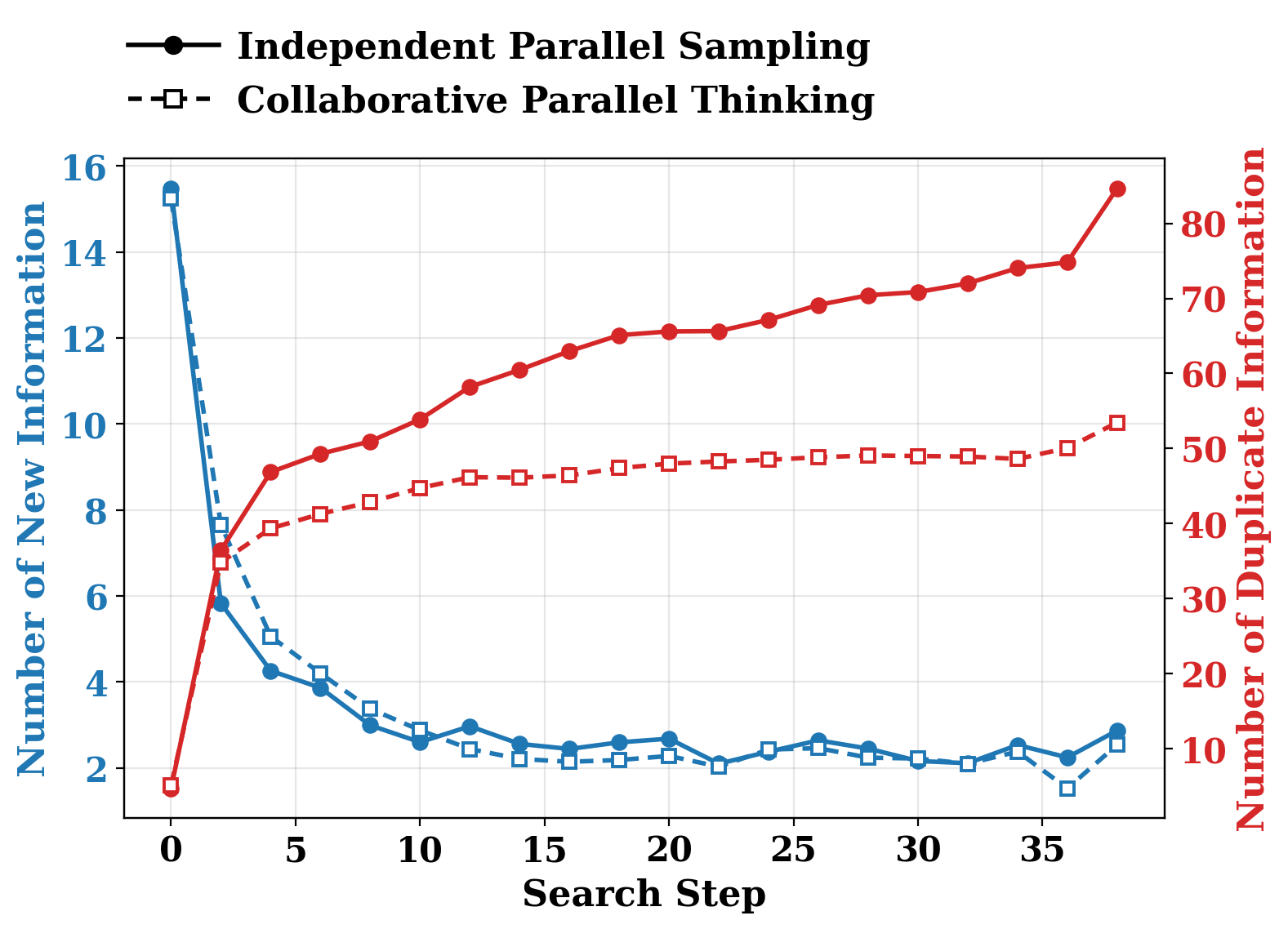}
\caption{
Information statistics during parallel search on HMMT24--25 using \textsc{Qwen3-30B-A3B-Thinking-2507} with 64 parallel samples.
At each 1024-token search step, we count newly discovered and duplicate information units, with details provided in Appendix~\ref{app:info_statistics}.
As search proceeds, independent parallel sampling yields more duplicate information units per step, whereas Collaborative Parallel Thinking mitigates this redundancy while maintaining a comparable number of newly discovered information units.
}
\label{fig:gain_duplication}
\end{figure}

\begin{table}[t]
\centering
\small
\setlength{\tabcolsep}{2.85pt}
\begin{tabular}{lcccccc}
\toprule
\multirow{2}{*}{\textbf{Metrics}} 
& \multicolumn{6}{c}{\textbf{Injection Ratio of Information (\%)}} \\ 
\cmidrule{2-7}
& \textbf{0} & \textbf{20} & \textbf{40} & \textbf{60} & \textbf{80} & \textbf{100} \\ 
\midrule
Pass@1 (\%) & 48.95 & 53.34 & 54.82 & 55.89 & 56.20 & 55.86 \\
Tokens      & 26714 & 16805 & 14128 & 12792 & 11924 & 11412 \\
Latency (s)     & 395   & 269   & 235   & 223   & 219   & 220   \\
\bottomrule
\end{tabular}
\caption{
Effect of sharing branch-distributed information via offline injection on HMMT24--25 using \textsc{Qwen3-4B-Thinking-2507} with 64 parallel samples.
Each injection ratio corresponds to randomly sampling that proportion of entries from a deduplicated information pool extracted from parallel branches and injecting them into the initial context (details in Appendix~\ref{app:info_injection}).
}
\label{tab:info_injection_4b}
\end{table}

\begin{figure*}[t]
\centering
\includegraphics[width=0.96\textwidth]{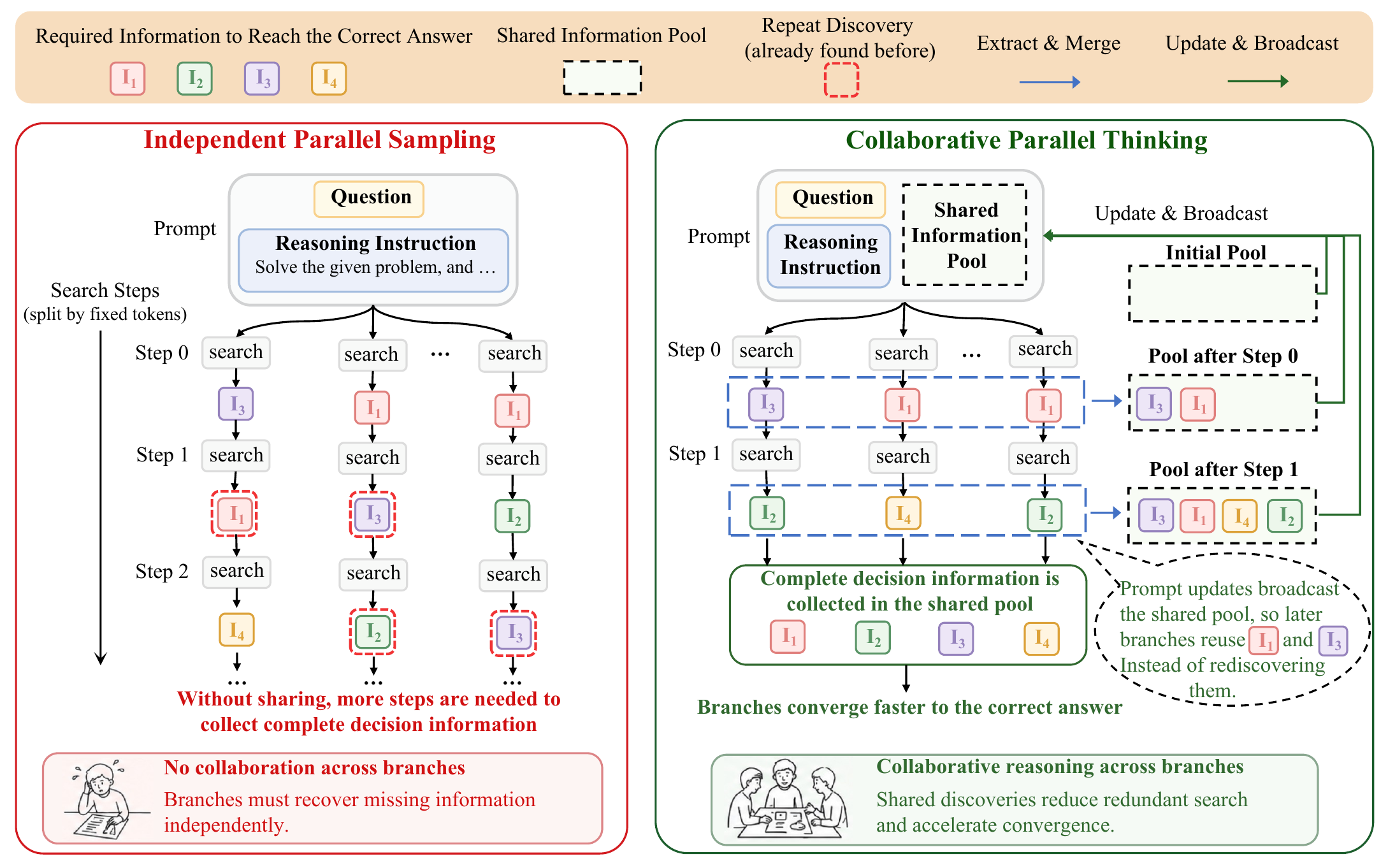}
\caption{
\textbf{Search-time collaboration reduces redundant parallel exploration.}
\textbf{(Left)} Independent parallel sampling keeps branches isolated during search, so each branch must recover missing decision information independently and may repeatedly rediscover information found elsewhere.
\textbf{(Right)} Collaborative Parallel Thinking (CPT) maintains a shared information pool that is provided to all branches. At fixed-token search steps, branch-level information is extracted, merged, and deduplicated into this pool. When broadcasting is active, pool entries are provided through the input context, allowing branches to reuse shared discoveries and converge faster once complete decision information is collected.}
\label{fig:overview}
\end{figure*}

While yielding promising performance gains, existing parallel TTS methods share a common bottleneck in information utilization~\citep{wang2025survey}: branch-level discoveries remain confined to the trajectories that produce them, rather than being converted into shared decision information during search (Figure~\ref{fig:overview}).
Consequently, different branches may repeatedly rediscover intermediate information that has already been found elsewhere, causing redundant exploration and slowing the accumulation of global decision information.

To validate this information-isolation bottleneck, we conduct two preliminary studies that examine both its induced redundancy and the utility of information that remains unshared across branches during search.
To quantify the redundancy induced by information isolation, we track information statistics at each fixed-token search step by counting newly discovered information units and duplicate information units.
Figure~\ref{fig:gain_duplication} shows that, as search progresses, independent parallel sampling repeats information already discovered by other branches, wasting substantial computation for only diminishing marginal information gain.
We further ask whether branch-distributed discoveries are useful once shared beyond the branches that originally found them.
As shown in Table~\ref{tab:info_injection_4b}, offline injection of information sampled from a deduplicated branch-information pool supports more accurate reasoning and faster search convergence.
Together, these results indicate that the bottleneck is not a lack of useful intermediate information within individual branches, but \emph{the absence of a search-time mechanism for pooling and broadcasting branch-private discoveries as reusable global decision information}.


Motivated by this bottleneck, we propose \textbf{Collaborative Parallel Thinking (CPT)}, a training-free inference method that enables parallel reasoning branches to share and reuse decision-relevant information during search.
Rather than treating branch-level discoveries as private byproducts of independent branches, CPT extracts compact information from ongoing branches and maintains it in a deduplicated query-level pool.
At fixed-token search steps, pool entries are broadcast through the input context, so subsequent generation chunks can reuse discoveries made by other branches while preserving each branch's private reasoning history.
To avoid premature convergence of exploration directions, CPT further adopts an adaptive broadcast schedule that controls when to share: branches first explore independently, broadcasting starts after marginal new-information gain drops substantially, and synchronization stops once further sharing offers limited benefit.



We evaluate CPT on challenging mathematical reasoning benchmarks, including HMMT24, HMMT25~\citep{balunovic_srimatharena_2025}, AIME24, AIME25, and AIME26 \citep{AIME26}, across a broad range of rollout budgets and policy models.
The empirical results show that CPT establishes a stronger accuracy--latency Pareto frontier than strong baselines across rollout budgets, achieving higher accuracy at matched wall-clock latency and demonstrating the efficiency gains of reducing redundant exploration through search-time collaboration.

To summarize, our main contributions are:
\begin{itemize}[leftmargin=*]
    \item We identify and validate an information-isolation bottleneck in parallel TTS: branch-private discoveries remain unshared during search, causing redundant rediscovery and slowing convergence toward correct answers.
    
    \item We propose Collaborative Parallel Thinking (CPT), a training-free inference method that enables parallel branches to share and reuse decision-relevant information during search through compact extraction, deduplicated pooling, shared-context broadcasting, and adaptive broadcast scheduling.
    
    \item We empirically validate CPT on challenging mathematical reasoning benchmarks, including HMMT24, HMMT25, AIME24, AIME25, and AIME26, demonstrating a stronger accuracy--latency Pareto frontier than strong baselines across diverse rollout budgets and model scales.
\end{itemize}

\section{Related Work}

\paragraph{Test-Time Scaling.}
The paradigm of scaling test-time compute has emerged as a critical avenue for enhancing reasoning capabilities~\citep{o1,DeepSeek-R1,wu2025inference}.
Existing methods differ in how this compute is organized~\citep{zhang2025survey}.
Sequential refinement methods extend or revise a single reasoning trajectory over time~\citep{wei2022chain,Self-Refine,shinn2023reflexion}, while structured search methods such as Tree-of-Thoughts, MCTS-style decoding, and verifier-guided search introduce explicit branching, scoring, and pruning over candidate prefixes~\citep{yao2023tree,liu2023don,wan2024alphazero,lightman2023let,Wang2025EveryRC}.
These methods improve the quality of test-time search by steering computation toward more promising reasoning states, but their reliance on sequential refinement, intermediate evaluation, or verifier-guided control can lengthen the critical inference path and introduce additional latency overhead~\citep{snell2024scaling,wang2025faster,zheng2026parallel,zheng2026llms,wang2026not}.
Parallel TTS instead generates multiple reasoning branches for the same problem simultaneously, accelerating solution-space exploration through width scaling~\citep{self-consistency,ESC,baseline-deep-c,zheng2025parallel}.
However, standard parallel sampling typically keeps branches independent during search and aggregates their outputs only after generation, leaving intermediate discoveries isolated within individual branches.
The closest attempt to move beyond this independent formulation is LeaP~\citep{luo2025learning}, which introduces peer-path routing for self-correction and helps reasoning paths recover from poor prefixes during inference.
Unlike LeaP's path-specific routing for self-correction, CPT is efficiency-oriented: it maintains a deduplicated query-level shared information pool and broadcasts it globally through the input context, turning branch-private discoveries into reusable decision information to reduce redundant rediscovery.

\paragraph{Collaborative Reasoning with LLMs.}
Collaboration among LLM instances has been explored through multi-agent debate and discussion, where agents exchange arguments, critiques, or proposals to encourage divergent reasoning, consensus formation, or improved final decisions~\citep{liang2024encouraging,chen2024reconcile,estornell2024multi,liu2025breaking}.
Recent analyses further examine when such interaction is beneficial compared with single-agent prompting or voting-based aggregation~\citep{wang2024rethinking,choi2026debate}.
To improve efficiency, follow-up work studies sparse communication topology, token-efficient debate, group discussion, and selective debate triggering~\citep{li2024improving,zeng2025s2,liu2024groupdebate,eo2025debate}.
Beyond debate, Mixture-of-Agents and scalable multi-agent collaboration frameworks aggregate or orchestrate multiple model instances to exploit complementary strengths~\citep{wang2025mixture,qian2025scaling}.
These works organize collaboration mainly at the agent, model, or response level.
In contrast, CPT studies collaboration among parallel reasoning branches of the same query during test-time search, sharing compact intermediate information through a deduplicated query-level pool to reduce redundant rediscovery.

\section{Methodology}
\label{sec:method}

Collaborative Parallel Thinking (CPT) is motivated by a simple principle: parallel branches should share useful discoveries during search instead of repeatedly rediscovering information found elsewhere.
We therefore turn parallel TTS from isolated branch-wise exploration into a search-time information-sharing process, where compact discoveries are extracted into a deduplicated query-level pool and broadcast through the input context to guide subsequent exploration.
To keep sharing lightweight and stable, CPT synchronizes branches only at fixed-token search steps while preserving their private reasoning histories.

The overall workflow of CPT comprises three coordinated components:
\textbf{Collaborative Parallel Search}, which synchronizes branches at fixed-token steps while preserving their private reasoning histories;
\textbf{Shared Information Pooling}, which extracts compact information, maintains a deduplicated query-level pool, and broadcasts pool entries through the input context; and
\textbf{Adaptive Broadcast Scheduling}, which decides when to start and stop broadcasting according to the relative rate of newly discovered information.
A detailed algorithmic description is provided in Algorithm~\ref{alg:cpt} in Appendix~\ref{app:cpt_algorithm}.

\subsection{Collaborative Parallel Search}
\label{sec:cpt_search}

Given a problem \(x\) and a policy model \(\pi\), CPT launches \(K\) reasoning branches in parallel.
At search step \(t\), each branch \(i\) maintains its own private reasoning history \(h_i^t\), while the system maintains a query-level information pool \(\mathcal{P}_t\).
Each search step corresponds to a fixed-token generation chunk of length \(C\).
During a step, every unfinished branch continues generation from its own private history, optionally conditioned on the current shared information block if broadcasting is active.
After the chunk is generated, CPT temporarily synchronizes the branches and passes their newly generated segments to the information-pooling stage.

A key design choice is that CPT only synchronizes at step boundaries.
It does not rewrite the generated reasoning trace, nor does it inject shared content into the middle of a branch's private chain.
Thus, each branch preserves its own exploration trajectory, while the shared information block serves as an additional context for subsequent search steps.
This design preserves branch-level exploration while enabling search-time reuse of information discovered by other branches.

\subsection{Shared Information Pooling}
\label{sec:pool_broadcast}

Raw reasoning trajectories are too long and noisy to be shared directly.
CPT therefore asks the same policy model \(\pi\) to extract compact search information from each branch, such as intermediate conclusions, constraints, observations, counterexamples, and useful checks.
Given the problem \(x\), the current branch history \(h_i^t\), and the newly generated segment \(\Delta h_i^t\), the extractor outputs a small set of textual information units:
\[
\mathcal{Z}_i^t = \mathsf{Extract}_{\pi}(x,h_i^t,\Delta h_i^t).
\]
This extraction step converts unstructured reasoning traces into compact information units that can be pooled and broadcast under a limited context budget.

All extracted information units are maintained in a deduplicated query-level pool \(\mathcal{P}_t\).
When updating the pool, CPT filters out candidates that are semantically similar to existing entries under an embedding-similarity threshold \(\tau_{\mathrm{dup}}\), where similarities are computed with text embeddings \(\phi(\cdot)\).
This keeps the pool compact and information-dense, preventing repeated branch discoveries from consuming the shared context budget.

When broadcasting is active, CPT samples at most \(M\) entries from the deduplicated pool and serializes them into the shared-information section of the input context.
Before the next search step, the same updated context is provided to all branches, while each branch keeps its own private reasoning history.
In this way, branch-private discoveries become reusable global decision information that can guide subsequent parallel exploration.

\subsection{Adaptive Broadcast Scheduling}
\label{sec:broadcast_schedule}

Broadcasting is helpful only when the shared pool contains information that can influence later search.
Starting too early may reduce exploration diversity before enough local discoveries have been collected, while continuing too late brings little new information but still incurs synchronization and prefilling overhead.
CPT therefore uses the rate of newly admitted pool entries as a lightweight proxy for marginal information gain.

Let \(n_t\) be the number of new entries added to the pool at search step \(t\).
To make this signal stable, CPT averages it over fixed windows of \(W\) search steps.
For the \(j\)-th window \(\mathcal{W}_j\), we compute
\[
g_j = \frac{1}{|\mathcal{W}_j|}\sum_{t\in \mathcal{W}_j} n_t,
\qquad
r_j = \frac{g_j}{g_1+\epsilon},
\]
where \(g_1\) is the average new-information count in the first window and \(\epsilon\) is a small constant for numerical stability.
The ratio \(r_j\) measures how much the current marginal information gain has declined relative to the initial exploration stage.
Using the first window as a problem-specific reference allows CPT to adapt to different reasoning lengths and information densities.

CPT uses two thresholds, a start threshold \(\tau_{\mathrm{start}}\) and a stop threshold \(\tau_{\mathrm{stop}}\), with \(\tau_{\mathrm{stop}}<\tau_{\mathrm{start}}\), to control the broadcast schedule.
In the \textbf{probe phase}, branches explore independently while still writing extracted information into the pool.
When \(r_j<\tau_{\mathrm{start}}\), CPT enters the \textbf{broadcast phase} and begins providing the shared pool to all branches at subsequent search steps.
When \(r_j<\tau_{\mathrm{stop}}\), CPT enters the \textbf{free-run phase}, stops further synchronization, and lets each branch decode until termination.
This schedule activates sharing when independent exploration starts to yield fewer new discoveries, and stops sharing when the marginal benefit of additional broadcasts becomes low.

\section{Experiments}
\label{sec:experiments}

\subsection{Experimental Setup}
\label{sec:exp_setup}

\paragraph{Benchmarks and Models.}
We evaluate the proposed CPT strategy on challenging mathematical reasoning benchmarks, including HMMT24, HMMT25~\citep{balunovic_srimatharena_2025}, AIME25, and AIME26~\citep{AIME26}.
These benchmarks require complex multi-step reasoning and serve as reliable testbeds for evaluating whether test-time methods can efficiently explore solution spaces under practical latency budgets.
We primarily evaluate CPT on models specialized for complex reasoning tasks, including \textsc{Qwen3-4B-Thinking-2507} and \textsc{Qwen3-30B-A3B-Thinking-2507}~\citep{llm-qwen3}.

\paragraph{Baselines.}
We compare CPT against representative parallel test-time scaling and efficiency strategies:
(1) \textbf{Base Parallel Sampling}, which directly samples multiple reasoning branches from the policy model without any additional search-time mechanism, reflecting the model's raw parallel-sampling capability;
(2) \textbf{DeepConf}~\citep{baseline-deep-c}, which uses confidence signals to identify and retain high-quality reasoning traces;
and
(3) \textbf{LeaP}~\citep{luo2025learning}, the closest cross-path interaction baseline, which uses peer-path routing for self-correction during inference.
Since our focus is latency-efficient parallel TTS, we do not compare against sequential or structured search methods such as Self-Refine, Tree-of-Thoughts, and MCTS-style search, whose iterative refinement or lookahead procedures often require additional serial computation and introduce substantial end-to-end latency overhead.

\begin{figure*}[t]
\centering
\includegraphics[width=1.0\textwidth]{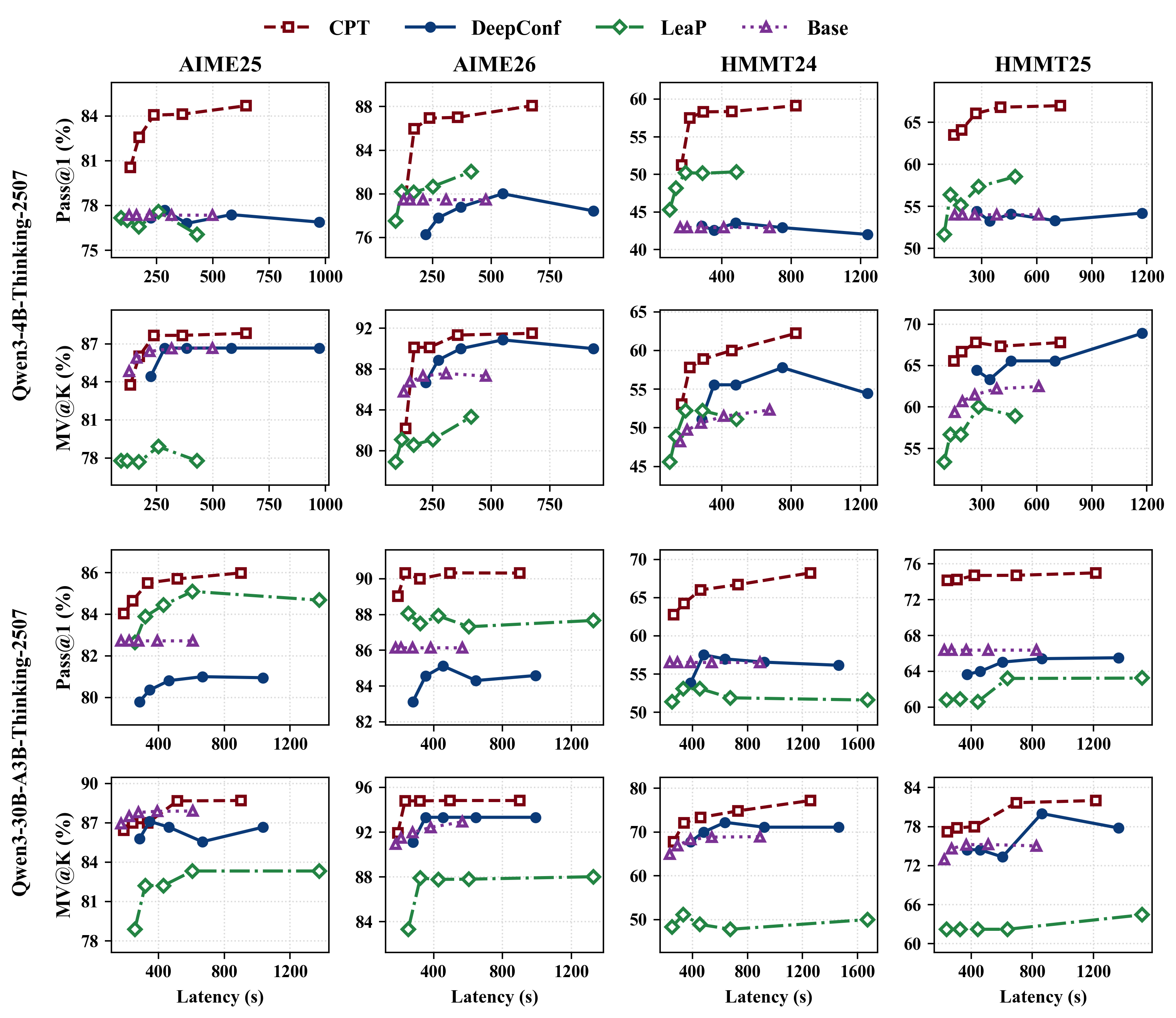}
\caption{Accuracy--Latency comparison across models and benchmarks under different rollout budgets.
}
\label{fig:main_results}
\end{figure*}

\paragraph{Evaluation Protocol.}
Unless otherwise specified, we evaluate each method under a range of parallel rollout budgets \(K\in\{8,16,32,64,128\}\), where larger \(K\) corresponds to scaling test-time computation by launching more reasoning branches.
For each budget, we measure wall-clock latency from the start of parallel decoding until all reasoning branches finish decoding, and report two accuracy metrics after answer normalization.
\textbf{Pass@1} measures the average correctness of individual sampled branches, while \textbf{MV@K} measures the majority-vote accuracy obtained by aggregating the final answers produced under the corresponding \(K\)-branch budget.
To ensure statistical reliability, we conduct \(8\) independent runs for all experiments, and report averaged results.
Since our focus is latency-efficient parallel TTS, we use wall-clock latency as the primary compute budget and compare methods by their accuracy--latency Pareto frontier.
For CPT, the reported latency is end-to-end and includes all operations executed by the method, including parallel generation, search-information extraction, semantic deduplication, input-context broadcast, and final answer aggregation. Prompt templates are provided in Appendix~\ref{app:prompts}.

\paragraph{Implementation Details.}
All experiments are executed in parallel on a cluster with 512 NVIDIA H800 GPUs, where each individual run is allocated to 8 GPUs.
To ensure reproducibility and fair comparison, we strictly follow the official recommended sampling hyperparameters for the Qwen3-Thinking series: temperature \(T=0.6\), top-\(p=0.95\), top-\(k=20\), with a maximum generation length of \(38\)k tokens.
For CPT, the same policy model is used for both reasoning generation and search-information extraction, without external reward models, verifiers, or task-specific training.
Unless otherwise specified, CPT uses a search-step length of \(2048\) tokens and samples at most \(M=512\) pool entries for each broadcast.
For semantic de-duplication, we encode extracted information units with \texttt{all-MiniLM-L6-v2}\footnote{\url{https://huggingface.co/sentence-transformers/all-MiniLM-L6-v2}} and use a cosine-similarity threshold of \(\tau_{\mathrm{dup}}=0.75\).
For adaptive broadcast scheduling, we compute the relative new-information gain over windows of \(W=3\) search steps, with \(\tau_{\mathrm{start}}=0.4\) and \(\tau_{\mathrm{stop}}=0.1\).
All CPT hyperparameters are kept fixed across benchmarks, models, and rollout budgets unless explicitly ablated.

\begin{table*}[t]
\centering
\small
\setlength{\tabcolsep}{3.0pt}
\resizebox{0.85\textwidth}{!}{
\begin{tabular}{llccccccccc}
\toprule
\multirow{2}{*}{\textbf{Benchmark}} 
& \multirow{2}{*}{\textbf{Metrics}} 
& \multicolumn{4}{c}{\textbf{Stop Threshold \(\tau_{\mathrm{stop}}\)}} 
& \multicolumn{5}{c}{\textbf{Start Threshold \(\tau_{\mathrm{start}}\)}} \\
\cmidrule(lr){3-6}\cmidrule(lr){7-11}
& & \textbf{0} & \textbf{0.10} & \textbf{0.20} & \textbf{0.30}
& \textbf{0.30} & \textbf{0.40} & \textbf{0.50} & \textbf{0.60} & \textbf{1.00} \\
\midrule
\multirow{3}{*}{HMMT24}
& Pass@1 (\%)              & 57.86 & 58.35 & 56.60 & 56.75 & 53.94 & 58.35 & 58.78 & 56.74 & 53.05 \\
& Tokens (\(\times 10^6\)) & 1.52  & 1.57  & 1.51  & 1.54  & 1.62  & 1.57  & 1.48  & 1.45  & 1.43  \\
& Latency (s)              & 482   & 457   & 413   & 395   & 436   & 457   & 453   & 447   & 449   \\
\midrule
\multirow{3}{*}{HMMT25}
& Pass@1 (\%)              & 66.94 & 66.79 & 65.33 & 64.58 & 59.67 & 66.79 & 65.28 & 66.22 & 63.37 \\
& Tokens (\(\times 10^6\)) & 1.43  & 1.47  & 1.42  & 1.42  & 1.50  & 1.47  & 1.38  & 1.34  & 1.29  \\
& Latency (s)              & 410   & 403   & 380   & 354   & 380   & 403   & 400   & 395   & 393   \\
\bottomrule
\end{tabular}
}
\caption{
Sensitivity analysis of adaptive broadcast scheduling thresholds using \textsc{Qwen3-4B-Thinking-2507} with 64 parallel samples.
For the stop-threshold sweep, \(\tau_{\mathrm{start}}=0.4\) is fixed; for the start-threshold sweep, \(\tau_{\mathrm{stop}}=0.1\) is fixed.
}
\label{tab:ablation_adaptive_thresholds}
\end{table*}

\subsection{Main Results}
\label{sec:main_results}

The main results across mathematical reasoning benchmarks and model scales are summarized in Figure~\ref{fig:main_results}.
CPT consistently establishes a stronger accuracy--latency Pareto frontier than the compared baselines across different rollout budgets, achieving higher accuracy under comparable wall-clock latency.
This advantage is observed on both \textsc{Qwen3-4B-Thinking-2507} and \textsc{Qwen3-30B-A3B-Thinking-2507}, and across AIME and HMMT benchmarks, indicating that search-time information sharing remains effective across model scales and problem distributions.
The simultaneous gains in Pass@1 and MV@K indicate that CPT improves individual branch quality and the overall effectiveness of parallel exploration through information sharing, rather than simply increasing final-answer consistency by prematurely aligning branch trajectories.
Overall, our results suggest that reducing redundant rediscovery through search-time information sharing is an effective mechanism for latency-efficient parallel TTS.

\begin{table*}[t]
\centering
\small
\setlength{\tabcolsep}{4.0pt}
\resizebox{0.9\textwidth}{!}{
\begin{tabular}{llccccc}
\toprule
\multirow{2}{*}{\textbf{Benchmark}} 
& \multirow{2}{*}{\textbf{Model}}
& \multicolumn{3}{c}{\textbf{Latency}} 
& \multicolumn{2}{c}{\textbf{FLOPs}} \\
\cmidrule(lr){3-5} \cmidrule(lr){6-7}
& & \textbf{Sampling} & \textbf{Info Extract} & \textbf{Dedup. \& Filter} 
  & \textbf{Sampling} & \textbf{Info Extract} \\
\midrule
\multirow{2}{*}{AIME25} 
& \textsc{Qwen3-4B}      & 307.90 & 35.65 & 18.16 & 112.43 & 1.66 \\
& \textsc{Qwen3-30B-A3B} & 420.37 & 61.59 & 29.44 & 81.99  & 1.70 \\
\midrule
\multirow{2}{*}{AIME26} 
& \textsc{Qwen3-4B}      & 291.00 & 39.79 & 21.92 & 107.11 & 1.50 \\
& \textsc{Qwen3-30B-A3B} & 415.13 & 50.42 & 24.86 & 82.02  & 1.80 \\
\midrule
\multirow{2}{*}{HMMT24} 
& \textsc{Qwen3-4B}      & 368.43 & 56.75 & 26.95 & 138.52 & 1.62 \\
& \textsc{Qwen3-30B-A3B} & 607.90 & 83.55 & 34.46 & 141.12 & 2.64 \\
\midrule
\multirow{2}{*}{HMMT25} 
& \textsc{Qwen3-4B}      & 331.72 & 44.42 & 22.39 & 120.02 & 1.50 \\
& \textsc{Qwen3-30B-A3B} & 582.40 & 75.03 & 32.27 & 132.76 & 2.35 \\
\bottomrule
\end{tabular}
}
\caption{
Component-wise latency and FLOPs of CPT with 64 parallel samples.
Latency is reported in seconds and FLOPs are reported in PFLOPs per question.
}
\label{tab:cpt_component_cost}
\end{table*}

\begin{table}[t]
\centering
\small
\setlength{\tabcolsep}{3.0pt}
\begin{tabular}{lcccccc}
\toprule
\multirow{2}{*}{\textbf{Metrics}} 
& \multicolumn{6}{c}{\textbf{Broadcast Size \(M\)}} \\ 
\cmidrule{2-7}
& \textbf{32} & \textbf{64} & \textbf{128} & \textbf{256} & \textbf{512} & \textbf{1024} \\ 
\midrule
Pass@1 (\%)              & 61.65 & 61.93 & 63.68 & 64.03 & 66.79 & 65.32 \\
Tokens (\(\times 10^6\)) & 1.53  & 1.48  & 1.43  & 1.42  & 1.42  & 1.42  \\
Latency (s)              & 429   & 411   & 408   & 407   & 402   & 385   \\
\bottomrule
\end{tabular}
\caption{
Effect of broadcast size \(M\) on HMMT25 using \textsc{Qwen3-4B-Thinking-2507} with 64 parallel samples. We vary the maximum number of pool entries inserted into the shared-information section at each broadcast.
}
\label{tab:ablation_broadcast_size}
\end{table}

\begin{table}[t]
\centering
\small
\setlength{\tabcolsep}{5.0pt}
\begin{tabular}{lccccc}
\toprule
\multirow{2}{*}{\textbf{Metrics}} 
& \multicolumn{5}{c}{\textbf{Similarity Threshold \(\tau_{\mathrm{dup}}\)}} \\ 
\cmidrule{2-6}
& \textbf{0.65} & \textbf{0.70} & \textbf{0.75} & \textbf{0.80} & \textbf{0.85} \\ 
\midrule
Pass@1 (\%)              & 62.60 & 64.57 & 66.79 & 65.92 & 66.66 \\
Tokens (\(\times 10^6\)) & 1.51  & 1.48  & 1.46  & 1.46  & 1.47  \\
Latency (s)              & 383   & 385   & 402   & 400   & 405   \\
\bottomrule
\end{tabular}
\caption{
Effect of similarity threshold \(\tau_{\mathrm{dup}}\) on HMMT25 using \textsc{Qwen3-4B-Thinking-2507} with 64 parallel samples.
We vary the semantic de-duplication threshold used to filter information units before adding them to the shared pool.
}
\label{tab:ablation_similarity_threshold}
\end{table}


\subsection{Analysis}
\label{sec:analysis}

\paragraph{Information Redundancy Reduction.}
Beyond the accuracy--latency frontier, we analyze how CPT changes the information content of parallel search.
As shown in Figure~\ref{fig:gain_duplication}, independent parallel sampling yields more duplicate information units per search step as search proceeds, while the number of newly discovered information units decreases.
This indicates that later exploration increasingly spends computation rediscovering information already found by other branches.
In contrast, CPT reduces per-step duplicate discovery while maintaining a comparable level of new-information discovery.
This suggests that the gains of CPT do not merely come from forcing branches to agree earlier, but from reducing the residual search burden through cross-branch information sharing.

\paragraph{Effect of Adaptive Broadcast Scheduling.}
Table~\ref{tab:ablation_adaptive_thresholds} analyzes the effect of the adaptive start and stop thresholds.
When fixing \(\tau_{\mathrm{start}}=0.4\), setting \(\tau_{\mathrm{stop}}=0\) disables early stopping of synchronization, while larger \(\tau_{\mathrm{stop}}\) values stop broadcasting earlier.
The results show that moderate early stopping generally provides a better accuracy--latency trade-off than either never stopping or stopping too aggressively, suggesting that continued synchronization becomes less useful once marginal new-information gain is low.
When fixing \(\tau_{\mathrm{stop}}=0.1\), setting \(\tau_{\mathrm{start}}=1.0\) starts broadcasting immediately, while smaller values allow branches to explore independently before sharing begins.
Immediate broadcasting does not yield the best accuracy, indicating that overly early sharing can prematurely align exploration directions.
Overall, these results support our adaptive scheduling design: CPT should first preserve independent exploration, then activate sharing when redundant discovery increases, and finally stop synchronization once additional broadcasts provide limited benefit.

\paragraph{Hyperparameter Sensitivity.}
We analyze the sensitivity of CPT to two key hyperparameters: the broadcast size \(M\) and the semantic de-duplication threshold \(\tau_{\mathrm{dup}}\).
As shown in Table~\ref{tab:ablation_broadcast_size}, increasing \(M\) generally improves Pass@1 and reduces both generated tokens and latency, indicating that providing more shared information can help branches avoid redundant exploration and converge more efficiently.
However, the gain saturates when \(M\) becomes large, and \(M=512\) achieves the best accuracy in our setting.
Table~\ref{tab:ablation_similarity_threshold} shows that CPT is relatively stable across a range of de-duplication thresholds, with \(\tau_{\mathrm{dup}}=0.75\) giving the best Pass@1.
A threshold that is too low may filter out useful but related information, while a threshold that is too high may admit redundant entries into the shared pool.
We therefore use \(M=512\) and \(\tau_{\mathrm{dup}}=0.75\) as the default configuration in our main experiments.

\paragraph{Component-wise Runtime Cost.}
Table~\ref{tab:cpt_component_cost} reports the component-wise latency and FLOPs of CPT at \(N=64\).
Parallel sampling remains the dominant source of wall-clock latency, while information extraction and semantic de-duplication/filtering add moderate overhead.
The FLOPs of information extraction are small compared with sampling, suggesting that the extraction itself is not the main computational bottleneck.
Instead, the main FLOPs trade-off arises from prompt-level broadcasting, which may require re-prefilling updated contexts in standard decoding implementations.
This motivates future cache-aware or attention-level sharing mechanisms to further reduce the computational overhead of CPT.

\paragraph{Generated Tokens and FLOPs.}
Although our main comparison uses wall-clock latency as the primary compute budget, we also examine generated tokens and FLOPs as auxiliary cost metrics in Appendix~\ref{app:flops_tokens_analysis}.
The accuracy--tokens results (Figure~\ref{fig:main_results_tokens}) show that CPT achieves a stronger token-efficiency frontier, indicating that search-time information sharing reduces unnecessary decoding spent on rediscovering information already found by other branches.
The FLOPs comparison (Figure~\ref{fig:main_results_flops}) reveals a different trade-off: as CPT implements sharing through input-context updates, standard decoding implementations may require re-prefilling the updated context, introducing additional computation.
Thus, CPT improves latency-efficient parallel exploration and reduces redundant generated tokens, while more cache-aware or attention-level sharing mechanisms are needed to further improve FLOPs efficiency.

\section{Conclusions}

In this work, we identify and validate an information-isolation bottleneck in parallel test-time scaling through preliminary studies: branch-private discoveries remain unshared during search, leading to redundant rediscovery and slower convergence toward correct answers. To bridge this gap, we introduce Collaborative Parallel Thinking (CPT), a training-free inference framework that enables search-time information sharing across parallel branches. CPT builds a shared information pool by extracting compact intermediate information from ongoing branches and merging overlapping entries. 
Pool entries are then broadcast through the input context, turning branch-private discoveries into reusable global decision information. Empirical evaluations on challenging mathematical reasoning benchmarks demonstrate that CPT establishes a stronger accuracy--latency Pareto frontier than strong baselines across rollout budgets and model scales. 
Overall, our findings highlight that the efficiency of parallel TTS depends not only on launching more reasoning branches, but also on enabling branches to share and reuse decision-relevant information during search.
\section*{Limitations}
Collaborative Parallel Thinking (CPT) uses a fixed-token synchronous protocol to make branch discoveries reusable during parallel search, balancing timely information reuse against synchronization and context-update overhead. Since the shared information pool is updated only after each search step, discoveries made within a step cannot influence other branches until the next step, so duplicate discoveries may still occur before the next broadcast. In addition, broadcasting the updated pool through the input context is not free: in standard decoding implementations, it can require re-prefilling the updated context, introducing additional FLOPs and latency overhead. Future work could reduce this cost by designing attention-level or cache-aware mechanisms that reuse prior computation, support lower-cost context updates, or enable finer-grained sharing without repeatedly recomputing the full prefix.
\bibliography{anthology,custom}

@article{baseline-deep-c,
  author       = {Yichao Fu and
                  Xuewei Wang and
                  Yuandong Tian and
                  Jiawei Zhao},
  title        = {Deep Think with Confidence},
  journal      = {CoRR},
  volume       = {abs/2508.15260},
  year         = {2025},
  url          = {https://doi.org/10.48550/arXiv.2508.15260},
  doi          = {10.48550/ARXIV.2508.15260},
  eprinttype   = {arXiv},
  eprint       = {2508.15260},
  bibsource    = {dblp computer science bibliography, https://dblp.org}
}

@inproceedings{self-consistency,
  author       = {Xuezhi Wang and
                  Jason Wei and
                  Dale Schuurmans and
                  Quoc V. Le and
                  Ed H. Chi and
                  Sharan Narang and
                  Aakanksha Chowdhery and
                  Denny Zhou},
  title        = {Self-Consistency Improves Chain of Thought Reasoning in Language Models},
  booktitle    = {The Eleventh International Conference on Learning Representations,
                  {ICLR} 2023, Kigali, Rwanda, May 1-5, 2023},
  publisher    = {OpenReview.net},
  year         = {2023},
  url          = {https://openreview.net/forum?id=1PL1NIMMrw},
  bibsource    = {dblp computer science bibliography, https://dblp.org}
}

@article{llm-qwen3,
  author       = {An Yang and
                  Anfeng Li and
                  Baosong Yang and
                  Beichen Zhang and
                  Binyuan Hui and
                  Bo Zheng and
                  Bowen Yu and
                  Chang Gao and
                  Chengen Huang and
                  Chenxu Lv and
                  Chujie Zheng and
                  Dayiheng Liu and
                  Fan Zhou and
                  Fei Huang and
                  Feng Hu and
                  Hao Ge and
                  Haoran Wei and
                  Huan Lin and
                  Jialong Tang and
                  Jian Yang and
                  Jianhong Tu and
                  Jianwei Zhang and
                  Jian Yang and
                  Jiaxi Yang and
                  Jingren Zhou and
                  Junyang Lin and
                  Kai Dang and
                  Keqin Bao and
                  Kexin Yang and
                  Le Yu and
                  Lianghao Deng and
                  Mei Li and
                  Mingfeng Xue and
                  Mingze Li and
                  Pei Zhang and
                  Peng Wang and
                  Qin Zhu and
                  Rui Men and
                  Ruize Gao and
                  Shixuan Liu and
                  Shuang Luo and
                  Tianhao Li and
                  Tianyi Tang and
                  Wenbiao Yin and
                  Xingzhang Ren and
                  Xinyu Wang and
                  Xinyu Zhang and
                  Xuancheng Ren and
                  Yang Fan and
                  Yang Su and
                  Yichang Zhang and
                  Yinger Zhang and
                  Yu Wan and
                  Yuqiong Liu and
                  Zekun Wang and
                  Zeyu Cui and
                  Zhenru Zhang and
                  Zhipeng Zhou and
                  Zihan Qiu},
  title        = {Qwen3 Technical Report},
  journal      = {CoRR},
  volume       = {abs/2505.09388},
  year         = {2025},
  url          = {https://doi.org/10.48550/arXiv.2505.09388},
  doi          = {10.48550/ARXIV.2505.09388},
  eprinttype    = {arXiv},
  eprint       = {2505.09388},
  bibsource    = {dblp computer science bibliography, https://dblp.org}
}

@inproceedings{ESC,
  title={Escape Sky-high Cost: Early-stopping Self-Consistency for Multi-step Reasoning},
  author={Li, Yiwei and Yuan, Peiwen and Feng, Shaoxiong and Pan, Boyuan and Wang, Xinglin and Sun, Bin and Wang, Heda and Li, Kan},
  booktitle={The Twelfth International Conference on Learning Representations},
  year={2024}
}

@article{wei2022chain,
  title={Chain-of-thought prompting elicits reasoning in large language models},
  author={Wei, Jason and Wang, Xuezhi and Schuurmans, Dale and Bosma, Maarten and Xia, Fei and Chi, Ed and Le, Quoc V and Zhou, Denny and others},
  journal={Advances in neural information processing systems},
  volume={35},
  pages={24824--24837},
  year={2022}
}

@article{snell2024scaling,
  title={Scaling llm test-time compute optimally can be more effective than scaling model parameters},
  author={Snell, Charlie and Lee, Jaehoon and Xu, Kelvin and Kumar, Aviral},
  journal={arXiv preprint arXiv:2408.03314},
  year={2024}
}

@inproceedings{wu2025inference,
  title={Inference scaling laws: An empirical analysis of compute-optimal inference for LLM problem-solving},
  author={Wu, Yangzhen and Sun, Zhiqing and Li, Shanda and Welleck, Sean and Yang, Yiming},
  booktitle={The Thirteenth International Conference on Learning Representations},
  year={2025}
}

@article{liu2025can,
  title={Can 1B LLM Surpass 405B LLM? Rethinking Compute-Optimal Test-Time Scaling},
  author={Liu, Runze and Gao, Junqi and Zhao, Jian and Zhang, Kaiyan and Li, Xiu and Qi, Biqing and Ouyang, Wanli and Zhou, Bowen},
  journal={arXiv preprint arXiv:2502.06703},
  year={2025}
}

@article{brown2024large,
  title={Large language monkeys: Scaling inference compute with repeated sampling},
  author={Brown, Bradley and Juravsky, Jordan and Ehrlich, Ryan and Clark, Ronald and Le, Quoc V and R{\'e}, Christopher and Mirhoseini, Azalia},
  journal={arXiv preprint arXiv:2407.21787},
  year={2024}
}

@inproceedings{lightman2023let,
  title={Let's verify step by step},
  author={Lightman, Hunter and Kosaraju, Vineet and Burda, Yuri and Edwards, Harrison and Baker, Bowen and Lee, Teddy and Leike, Jan and Schulman, John and Sutskever, Ilya and Cobbe, Karl},
  booktitle={The Twelfth International Conference on Learning Representations},
  year={2024}
}

@misc{o1,
  title  = {Learning to Reason with LLMs},
  author = {OpenAI},
  url    = {https://openai.com/index/learning-to-reason-with-llms/},
  year   = {2024}
}

@misc{AIME24,
  title  = {AIME 2024},
  author = {{AI-MO}},
  url    = {https://huggingface.co/datasets/AI-MO/aimo-validation-aime},
  year   = {2024}
}

@article{liu2023don,
  title={Don't throw away your value model! Generating more preferable text with Value-Guided Monte-Carlo Tree Search decoding},
  author={Liu, Jiacheng and Cohen, Andrew and Pasunuru, Ramakanth and Choi, Yejin and Hajishirzi, Hannaneh and Celikyilmaz, Asli},
  journal={arXiv preprint arXiv:2309.15028},
  year={2023}
}

@article{Kimi-k1.5,
  title   = {Kimi k1.5: Scaling Reinforcement Learning with LLMs},
  author  = {{Kimi Team} and Du, Angang and Gao, Bofei and Xing, Bowei and Jiang, Changjiu and Chen, Cheng and Li, Cheng and Xiao, Chenjun and Du, Chenzhuang and Liao, Chonghua and others},
  journal = {arXiv preprint arXiv:2501.12599},
  year    = {2025}
}

@article{DeepSeek-R1,
  title   = {DeepSeek-R1: Incentivizing Reasoning Capability in LLMs via Reinforcement Learning},
  author  = {{DeepSeek-AI} and Daya Guo and Dejian Yang and Haowei Zhang and Junxiao Song and Ruoyu Zhang and Runxin Xu and Qihao Zhu and Shirong Ma and Peiyi Wang and Xiao Bi and Xiaokang Zhang and Xingkai Yu and Yu Wu and Z. F. Wu and Zhibin Gou and Zhihong Shao and Zhuoshu Li and Ziyi Gao and Aixin Liu and Bing Xue and Bingxuan Wang and Bochao Wu and Bei Feng and Chengda Lu and Chenggang Zhao and Chengqi Deng and Chenyu Zhang and Chong Ruan and Damai Dai and Deli Chen and Dongjie Ji and Erhang Li and Fangyun Lin and Fucong Dai and Fuli Luo and Guangbo Hao and Guanting Chen and Guowei Li and H. Zhang and Han Bao and Hanwei Xu and Haocheng Wang and Honghui Ding and Huajian Xin and Huazuo Gao and Hui Qu and Hui Li and Jianzhong Guo and Jiashi Li and Jiawei Wang and Jingchang Chen and Jingyang Yuan and Junjie Qiu and Junlong Li and J. L. Cai and Jiaqi Ni and Jian Liang and Jin Chen and Kai Dong and Kai Hu and Kaige Gao and Kang Guan and Kexin Huang and Kuai Yu and Lean Wang and Lecong Zhang and Liang Zhao and Litong Wang and Liyue Zhang and Lei Xu and Leyi Xia and Mingchuan Zhang and Minghua Zhang and Minghui Tang and Meng Li and Miaojun Wang and Mingming Li and Ning Tian and Panpan Huang and Peng Zhang and Qiancheng Wang and Qinyu Chen and Qiushi Du and Ruiqi Ge and Ruisong Zhang and Ruizhe Pan and Runji Wang and R. J. Chen and R. L. Jin and Ruyi Chen and Shanghao Lu and Shangyan Zhou and Shanhuang Chen and Shengfeng Ye and Shiyu Wang and Shuiping Yu and Shunfeng Zhou and Shuting Pan and S. S. Li and Shuang Zhou and Shaoqing Wu and Shengfeng Ye and Tao Yun and Tian Pei and Tianyu Sun and T. Wang and Wangding Zeng and Wanjia Zhao and Wen Liu and Wenfeng Liang and Wenjun Gao and Wenqin Yu and Wentao Zhang and W. L. Xiao and Wei An and Xiaodong Liu and Xiaohan Wang and Xiaokang Chen and Xiaotao Nie and Xin Cheng and Xin Liu and Xin Xie and Xingchao Liu and Xinyu Yang and Xinyuan Li and Xuecheng Su and Xuheng Lin and X. Q. Li and Xiangyue Jin and Xiaojin Shen and Xiaosha Chen and Xiaowen Sun and Xiaoxiang Wang and Xinnan Song and Xinyi Zhou and Xianzu Wang and Xinxia Shan and Y. K. Li and Y. Q. Wang and Y. X. Wei and Yang Zhang and Yanhong Xu and Yao Li and Yao Zhao and Yaofeng Sun and Yaohui Wang and Yi Yu and Yichao Zhang and Yifan Shi and Yiliang Xiong and Ying He and Yishi Piao and Yisong Wang and Yixuan Tan and Yiyang Ma and Yiyuan Liu and Yongqiang Guo and Yuan Ou and Yuduan Wang and Yue Gong and Yuheng Zou and Yujia He and Yunfan Xiong and Yuxiang Luo and Yuxiang You and Yuxuan Liu and Yuyang Zhou and Y. X. Zhu and Yanhong Xu and Yanping Huang and Yaohui Li and Yi Zheng and Yuchen Zhu and Yunxian Ma and Ying Tang and Yukun Zha and Yuting Yan and Z. Z. Ren and Zehui Ren and Zhangli Sha and Zhe Fu and Zhean Xu and Zhenda Xie and Zhengyan Zhang and Zhewen Hao and Zhicheng Ma and Zhigang Yan and Zhiyu Wu and Zihui Gu and Zijia Zhu and Zijun Liu and Zilin Li and Ziwei Xie and Ziyang Song and Zizheng Pan and Zhen Huang and Zhipeng Xu and Zhongyu Zhang and Zhen Zhang},
  journal = {arXiv preprint arXiv:2501.12948},
  year    = {2025}
}

@inproceedings{wan2024alphazero,
  title     = {{A}lpha{Z}ero-Like Tree-Search can Guide Large Language Model Decoding and Training},
  author    = {Wan, Ziyu and Feng, Xidong and Wen, Muning and Mcaleer, Stephen Marcus and Wen, Ying and Zhang, Weinan and Wang, Jun},
  booktitle = {International Conference on Machine Learning (ICML)},
  pages     = {49890--49920},
  volume    = {235},
  year      = {2024}
}

@inproceedings{Self-Refine,
  title     = {Self-Refine: Iterative Refinement with Self-Feedback},
  author    = {Madaan, Aman and Tandon, Niket and Gupta, Prakhar and Hallinan, Skyler and Gao, Luyu and Wiegreffe, Sarah and Alon, Uri and Dziri, Nouha and Prabhumoye, Shrimai and Yang, Yiming and Gupta, Shashank and Majumder, Bodhisattwa Prasad and Hermann, Katherine and Welleck, Sean and Yazdanbakhsh, Amir and Clark, Peter},
  booktitle = {Advances in Neural Information Processing Systems (NeurIPS)},
  pages     = {46534--46594},
  volume    = {36},
  year      = {2023}
}

@article{yao2023tree,
  title={Tree of thoughts: Deliberate problem solving with large language models},
  author={Yao, Shunyu and Yu, Dian and Zhao, Jeffrey and Shafran, Izhak and Griffiths, Tom and Cao, Yuan and Narasimhan, Karthik},
  journal={Advances in neural information processing systems},
  volume={36},
  pages={11809--11822},
  year={2023}
}

@article{zhang2025survey,
  title={A Survey on Test-Time Scaling in Large Language Models: What, How, Where, and How Well?},
  author={Zhang, Qiyuan and Lyu, Fuyuan and Sun, Zexu and Wang, Lei and Zhang, Weixu and Hua, Wenyue and Wu, Haolun and Guo, Zhihan and Wang, Yufei and Muennighoff, Niklas and others},
  journal={arXiv preprint arXiv:2503.24235},
  year={2025}
}

@misc{balunovic_srimatharena_2025,
  title = {MathArena: Evaluating LLMs on Uncontaminated Math Competitions},
  author = {Mislav Balunović and Jasper Dekoninck and Ivo Petrov and Nikola Jovanović and Martin Vechev},
  copyright = {MIT},
  url = {https://matharena.ai/},
  publisher = {SRI Lab, ETH Zurich},
  month = feb,
  year = {2025},
}

@article{shinn2023reflexion,
  title={Reflexion: Language agents with verbal reinforcement learning},
  author={Shinn, Noah and Cassano, Federico and Gopinath, Ashwin and Narasimhan, Karthik and Yao, Shunyu},
  journal={Advances in Neural Information Processing Systems},
  volume={36},
  pages={8634--8652},
  year={2023}
}

@misc{nemo-rl,
title = {NeMo RL: A Scalable and Efficient Post-Training Library},
howpublished = {\url{https://github.com/NVIDIA-NeMo/RL}},
year = {2025},
note = {GitHub repository},
}

@article{Wang2025EveryRC,
  title={Every Rollout Counts: Optimal Resource Allocation for Efficient Test-Time Scaling},
  author={Xinglin Wang and Yiwei Li and Shaoxiong Feng and Peiwen Yuan and Yueqi Zhang and Jiayi Shi and Chuyi Tan and Boyuan Pan and Yao Hu and Kan Li},
  journal={ArXiv},
  year={2025},
  volume={abs/2506.15707},
  url={https://api.semanticscholar.org/CorpusID:279464512}
}

@article{zheng2026parallel,
  title={Parallel-Probe: Towards Efficient Parallel Thinking via 2D Probing},
  author={Zheng, Tong and Huang, Chengsong and Dai, Runpeng and He, Yun and Liu, Rui and Ni, Xin and Bao, Huiwen and Wang, Kaishen and Zhu, Hongtu and Huang, Jiaxin and others},
  journal={arXiv preprint arXiv:2602.03845},
  year={2026}
}

@article{luo2025learning,
  title={Learning from Peers in Reasoning Models},
  author={Luo, Tongxu and Du, Wenyu and Bi, Jiaxi and Chung, Stephen and Tang, Zhengyang and Yang, Hao and Zhang, Min and Wang, Benyou},
  journal={arXiv preprint arXiv:2505.07787},
  year={2025}
}

@article{wang2025faster,
  title={Faster and better llms via latency-aware test-time scaling},
  author={Wang, Zili and Zhang, Tianyu and Bai, Haoli and Hou, Lu and Yu, Xianzhi and Liu, Wulong and Xiang, Shiming and Zhu, Lei},
  journal={arXiv preprint arXiv:2505.19634},
  year={2025}
}

@article{wang2026not,
  title={Do Not Waste Your Rollouts: Recycling Search Experience for Efficient Test-Time Scaling},
  author={Wang, Xinglin and Shi, Jiayi and Feng, Shaoxiong and Yuan, Peiwen and Li, Yiwei and Zhang, Yueqi and Tan, Chuyi and Zhang, Ji and Pan, Boyuan and Hu, Yao and others},
  journal={arXiv preprint arXiv:2601.21684},
  year={2026}
}

@inproceedings{liang2024encouraging,
  title={Encouraging divergent thinking in large language models through multi-agent debate},
  author={Liang, Tian and He, Zhiwei and Jiao, Wenxiang and Wang, Xing and Wang, Yan and Wang, Rui and Yang, Yujiu and Shi, Shuming and Tu, Zhaopeng},
  booktitle={Proceedings of the 2024 conference on empirical methods in natural language processing},
  pages={17889--17904},
  year={2024}
}

@inproceedings{chen2024reconcile,
  title={Reconcile: Round-table conference improves reasoning via consensus among diverse llms},
  author={Chen, Justin and Saha, Swarnadeep and Bansal, Mohit},
  booktitle={Proceedings of the 62nd Annual Meeting of the Association for Computational Linguistics (Volume 1: Long Papers)},
  pages={7066--7085},
  year={2024}
}

@article{estornell2024multi,
  title={Multi-llm debate: Framework, principals, and interventions},
  author={Estornell, Andrew and Liu, Yang},
  journal={Advances in Neural Information Processing Systems},
  volume={37},
  pages={28938--28964},
  year={2024}
}

@inproceedings{liu2025breaking,
  title={Breaking mental set to improve reasoning through diverse multi-agent debate},
  author={Liu, Yexiang and Cao, Jie and Li, Zekun and He, Ran and Tan, Tieniu},
  booktitle={The Thirteenth International Conference on Learning Representations},
  year={2025}
}

@inproceedings{wang2024rethinking,
  title={Rethinking the bounds of llm reasoning: Are multi-agent discussions the key?},
  author={Wang, Qineng and Wang, Zihao and Su, Ying and Tong, Hanghang and Song, Yangqiu},
  booktitle={Proceedings of the 62nd Annual Meeting of the Association for Computational Linguistics (Volume 1: Long Papers)},
  pages={6106--6131},
  year={2024}
}

@article{choi2026debate,
  title={Debate or vote: Which yields better decisions in multi-agent large language models?},
  author={Choi, Hyeong Kyu and Zhu, Jerry and Li, Sharon},
  journal={Advances in Neural Information Processing Systems},
  volume={38},
  pages={101732--101764},
  year={2026}
}

@inproceedings{li2024improving,
  title={Improving multi-agent debate with sparse communication topology},
  author={Li, Yunxuan and Du, Yibing and Zhang, Jiageng and Hou, Le and Grabowski, Peter and Li, Yeqing and Ie, Eugene},
  booktitle={Findings of the Association for Computational Linguistics: EMNLP 2024},
  pages={7281--7294},
  year={2024}
}

@inproceedings{zeng2025s2,
  title={S2-mad: Breaking the token barrier to enhance multi-agent debate efficiency},
  author={Zeng, Yuting and Huang, Weizhe and Jiang, Lei and Liu, Tongxuan and Jin, Xitai and Tiana, Chen Tianying and Li, Jing and Xu, Xiaohua},
  booktitle={Proceedings of the 2025 Conference of the Nations of the Americas Chapter of the Association for Computational Linguistics: Human Language Technologies (Volume 1: Long Papers)},
  pages={9393--9408},
  year={2025}
}

@article{liu2024groupdebate,
  title={Groupdebate: Enhancing the efficiency of multi-agent debate using group discussion},
  author={Liu, Tongxuan and Wang, Xingyu and Huang, Weizhe and Xu, Wenjiang and Zeng, Yuting and Jiang, Lei and Yang, Hailong and Li, Jing},
  journal={arXiv preprint arXiv:2409.14051},
  year={2024}
}

@article{eo2025debate,
  title={Debate only when necessary: Adaptive multiagent collaboration for efficient llm reasoning},
  author={Eo, Sugyeong and Moon, Hyeonseok and Zi, Evelyn Hayoon and Park, Chanjun and Lim, Heuiseok},
  journal={arXiv preprint arXiv:2504.05047},
  year={2025}
}

@inproceedings{wang2025mixture,
  title={Mixture-of-agents enhances large language model capabilities},
  author={Wang, Junlin and Wang, Jue and Athiwaratkun, Ben and Zhang, Ce and Zou, James Y},
  booktitle={International Conference on Learning Representations},
  volume={2025},
  pages={33944--33963},
  year={2025}
}

@inproceedings{qian2025scaling,
  title={Scaling large language model-based multi-agent collaboration},
  author={Qian, Chen and Xie, Zihao and Wang, Yifei and Liu, Wei and Zhu, Kunlun and Xia, Hanchen and Dang, Yufan and Du, Zhuoyun and Chen, Weize and Yang, Cheng and others},
  booktitle={International Conference on Learning Representations},
  volume={2025},
  pages={41488--41505},
  year={2025}
}

@article{wang2025survey,
  title={A survey on parallel reasoning},
  author={Wang, Ziqi and Niu, Boye and Gao, Zipeng and Zheng, Zhi and Xu, Tong and Meng, Linghui and Li, Zhongli and Liu, Jing and Chen, Yilong and Zhu, Chen and others},
  journal={arXiv preprint arXiv:2510.12164},
  year={2025}
}

@article{AIME26,
      title={Beyond Benchmarks: MathArena as an Evaluation Platform for Mathematics with LLMs}, 
      author={Jasper Dekoninck and Nikola Jovanović and Tim Gehrunger and Kári Rögnvaldsson and Ivo Petrov and Chenhao Sun and Martin Vechev},
      year={2026},
      eprint={2605.00674},
      archivePrefix={arXiv},
      primaryClass={cs.CL},
      url={https://arxiv.org/abs/2605.00674}, 
}

@article{zheng2026llms,
  title={LLMs Improving LLMs: Agentic Discovery for Test-Time Scaling},
  author={Zheng, Tong and Liu, Haolin and Huang, Chengsong and Bao, Huiwen and Zhang, Sheng and Liu, Rui and Dai, Runpeng and Chen, Ruibo and Liu, Chenxi and Xiong, Tianyi and others},
  journal={arXiv preprint arXiv:2605.08083},
  year={2026}
}

@article{zheng2025parallel,
  title={Parallel-r1: Towards parallel thinking via reinforcement learning},
  author={Zheng, Tong and Zhang, Hongming and Yu, Wenhao and Wang, Xiaoyang and Dai, Runpeng and Liu, Rui and Bao, Huiwen and Huang, Chengsong and Huang, Heng and Yu, Dong},
  journal={arXiv preprint arXiv:2509.07980},
  year={2025}
}
\clearpage
\appendix

\section{Collaborative Parallel Thinking Algorithm}
\label{app:cpt_algorithm}

Algorithm~\ref{alg:cpt} provides the full procedure of Collaborative Parallel Thinking (CPT).
For a problem \(x\), CPT maintains \(K\) private branch histories \(\{h_i\}_{i=1}^K\) and a query-level shared information pool \(\mathcal{P}\).
The procedure alternates between fixed-token parallel generation and pool update: at each search step, unfinished branches decode up to \(C\) tokens conditioned on the current broadcast set \(\mathcal{B}\), after which the same policy model extracts compact information from the newly generated segments.
Candidate information units are added to \(\mathcal{P}\) only when they are not semantically similar to existing entries, keeping the pool as a compact and deduplicated state of branch discoveries.

The broadcast schedule is controlled by the new-information record \(\mathcal{N}\).
CPT starts in the \textsc{Probe} mode, where branches explore independently while still writing extracted information into the pool.
Once the windowed relative information gain falls below \(\tau_{\mathrm{start}}\), CPT enters the \textsc{Broadcast} mode and samples pool entries to all branches through the shared-information section of the input context.
When the relative gain further falls below \(\tau_{\mathrm{stop}}\), CPT enters the \textsc{FreeRun} mode and lets the remaining branches decode until EOS or \(L_{\max}\) without further synchronization.
This implements bounded search-time sharing while preserving each branch's private reasoning history.

\begin{algorithm*}[!t]
\caption{Collaborative Parallel Thinking (CPT)}
\label{alg:cpt}
\small
\begin{algorithmic}[1]
\REQUIRE Problem \(x\), model \(\pi\), branch number \(K\), search-step length \(C\), maximum length \(L_{\max}\), broadcast size \(M\), deduplication threshold \(\tau_{\mathrm{dup}}\), window size \(W\), start threshold \(\tau_{\mathrm{start}}\), stop threshold \(\tau_{\mathrm{stop}}\), numerical constant \(\epsilon\).
\ENSURE Final reasoning branches \(\{h_i\}_{i=1}^{K}\).
\STATE Initialize branch histories \(h_i\gets\emptyset\) for all \(i\in[K]\); shared pool \(\mathcal{P}\gets\emptyset\); mode \(\gets \textsc{Probe}\); new-information record \(\mathcal{N}\gets[]\); reference gain \(g_{\mathrm{ref}}\gets\textsc{None}\).
\WHILE{some branch is unfinished, i.e., it has reached neither EOS nor \(L_{\max}\)}

    \STATE \textcolor{gray}{\small \textbf{// Component 1: Collaborative Parallel Search (Sec.~\ref{sec:cpt_search})}}
    \IF{mode \(=\textsc{Broadcast}\)}
        \STATE \(\mathcal{B}\gets \mathcal{P}\) if \(|\mathcal{P}|\le M\); otherwise sample \(M\) entries from \(\mathcal{P}\).
    \ELSE
        \STATE \(\mathcal{B}\gets\emptyset\).
    \ENDIF
    \FOR{\textbf{each unfinished branch} \(i\in[K]\) \textbf{in parallel}}
        \STATE \(\Delta h_i \sim \pi(\cdot \mid \mathsf{Prompt}(x,h_i,\mathcal{B}))\) for at most \(C\) tokens, or until EOS or \(L_{\max}\) is reached.
        \STATE \(h_i\gets h_i\oplus \Delta h_i\).
    \ENDFOR
    \IF{all branches have reached EOS or \(L_{\max}\)}
        \STATE \textbf{break}
    \ENDIF

    \STATE \textcolor{gray}{\small \textbf{// Component 2: Shared Information Pooling (Sec.~\ref{sec:pool_broadcast})}}
    \STATE \(n\gets 0\).
    \FOR{\textbf{each branch} \(i\) \textbf{with newly generated segment} \(\Delta h_i\)}
        \STATE \(\mathcal{Z}_i\gets \mathsf{Extract}_{\pi}(x,h_i,\Delta h_i)\).
        \FOR{\textbf{each} \(z\in\mathcal{Z}_i\)}
            \IF{\(\mathcal{P}=\emptyset\)}
                \STATE \(\mathcal{P}\gets\mathcal{P}\cup\{z\}\); \quad \(n\gets n+1\).
            \ELSE
                \STATE \(s^\star\gets\max_{p\in\mathcal{P}}\cos(\phi(z),\phi(p))\).
                \IF{\(s^\star<\tau_{\mathrm{dup}}\)}
                    \STATE \(\mathcal{P}\gets\mathcal{P}\cup\{z\}\); \quad \(n\gets n+1\).
                \ENDIF
            \ENDIF
        \ENDFOR
    \ENDFOR
    \STATE Append \(n\) to \(\mathcal{N}\).

    \STATE \textcolor{gray}{\small \textbf{// Component 3: Adaptive Broadcast Scheduling (Sec.~\ref{sec:broadcast_schedule})}}
    \IF{\(|\mathcal{N}| \equiv 0 \pmod{W}\)}
        \STATE \(j\gets |\mathcal{N}|/W\); \quad \(g_j\gets \frac{1}{W}\sum_{s=(j-1)W+1}^{jW}\mathcal{N}[s]\).
        \IF{\(g_{\mathrm{ref}}=\textsc{None}\)}
            \STATE \(g_{\mathrm{ref}}\gets g_j\).
        \ELSE
            \STATE \(r_j\gets g_j/(g_{\mathrm{ref}}+\epsilon)\).
            \IF{mode \(=\textsc{Probe}\) \textbf{and} \(r_j<\tau_{\mathrm{start}}\)}
                \STATE mode \(\gets\textsc{Broadcast}\).
            \ELSIF{mode \(=\textsc{Broadcast}\) \textbf{and} \(r_j<\tau_{\mathrm{stop}}\)}
                \STATE mode \(\gets\textsc{FreeRun}\).
            \ENDIF
        \ENDIF
    \ENDIF
    \IF{mode \(=\textsc{FreeRun}\)}
        \FOR{\textbf{each unfinished branch} \(i\in[K]\) \textbf{in parallel}}
            \STATE Continue decoding branch \(i\) until EOS or \(L_{\max}\) is reached, without further synchronization.
        \ENDFOR
        \STATE \textbf{break}
    \ENDIF
\ENDWHILE
\STATE \textbf{return} \(\{h_i\}_{i=1}^{K}\).
\end{algorithmic}
\end{algorithm*}





\section{Details of Preliminary Studies}
\label{app:preliminary_studies}

\subsection{Information Statistics during Parallel Search}
\label{app:info_statistics}

We provide details for the information-statistics experiment in Figure~\ref{fig:gain_duplication}.
The experiment is conducted on HMMT24--25 using \textsc{Qwen3-30B-A3B-Thinking-2507} with \(64\) parallel reasoning branches, where each search step corresponds to a \(1024\)-token generation chunk.
At each step, we use the same information-extraction prompt as CPT, detailed in Appendix~\ref{app:prompts}, to extract compact information units from branch trajectories.
For each problem, we maintain a running analysis pool and compare each extracted candidate with existing pool entries using the same embedding-based semantic de-duplication protocol as CPT: if its maximum similarity to existing entries is below the de-duplication threshold, it is admitted into the pool and counted as newly discovered information for the current step; otherwise, it is counted as duplicate information for the current step.
Since one search step consumes \(64\times1024\) generated tokens, we normalize the raw information counts and report the number of information units per \(10{,}240\) generated tokens, i.e., multiplying each per-step count by \(10{,}240/(64\times1024)\).

\subsection{Offline Information Injection}
\label{app:info_injection}

We provide details for the offline information-injection experiment in Table~\ref{tab:info_injection_4b}, which tests whether information originally scattered across different reasoning branches is decision-useful once shared. For each problem, we first collect parallel reasoning trajectories, use the same information-extraction prompt as CPT (Appendix~\ref{app:prompts}) to extract compact information units, and merge them into a deduplicated information pool using the same semantic de-duplication protocol. Given an injection ratio \(\rho\in\{0,20,40,60,80,100\}\), we randomly sample \(\rho\%\) of the deduplicated pool entries and inject them into the initial shared-information section of the prompt, where \(0\%\) denotes no injected information and \(100\%\) denotes using the full pool. We then run standard parallel sampling with \(64\) branches on HMMT24--25 using \textsc{Qwen3-4B-Thinking-2507}, without online search-time extraction, pool updating, or broadcasting. All branches receive the same injected context at the beginning, and we report Pass@1, generated tokens, and wall-clock latency. All injected information is extracted from model-generated trajectories, without using gold answers, external verifiers, or human annotations. We repeat each setting for \(8\) independent runs and report the averaged results.

\begin{figure*}[t]
\centering
\includegraphics[width=1.0\textwidth]{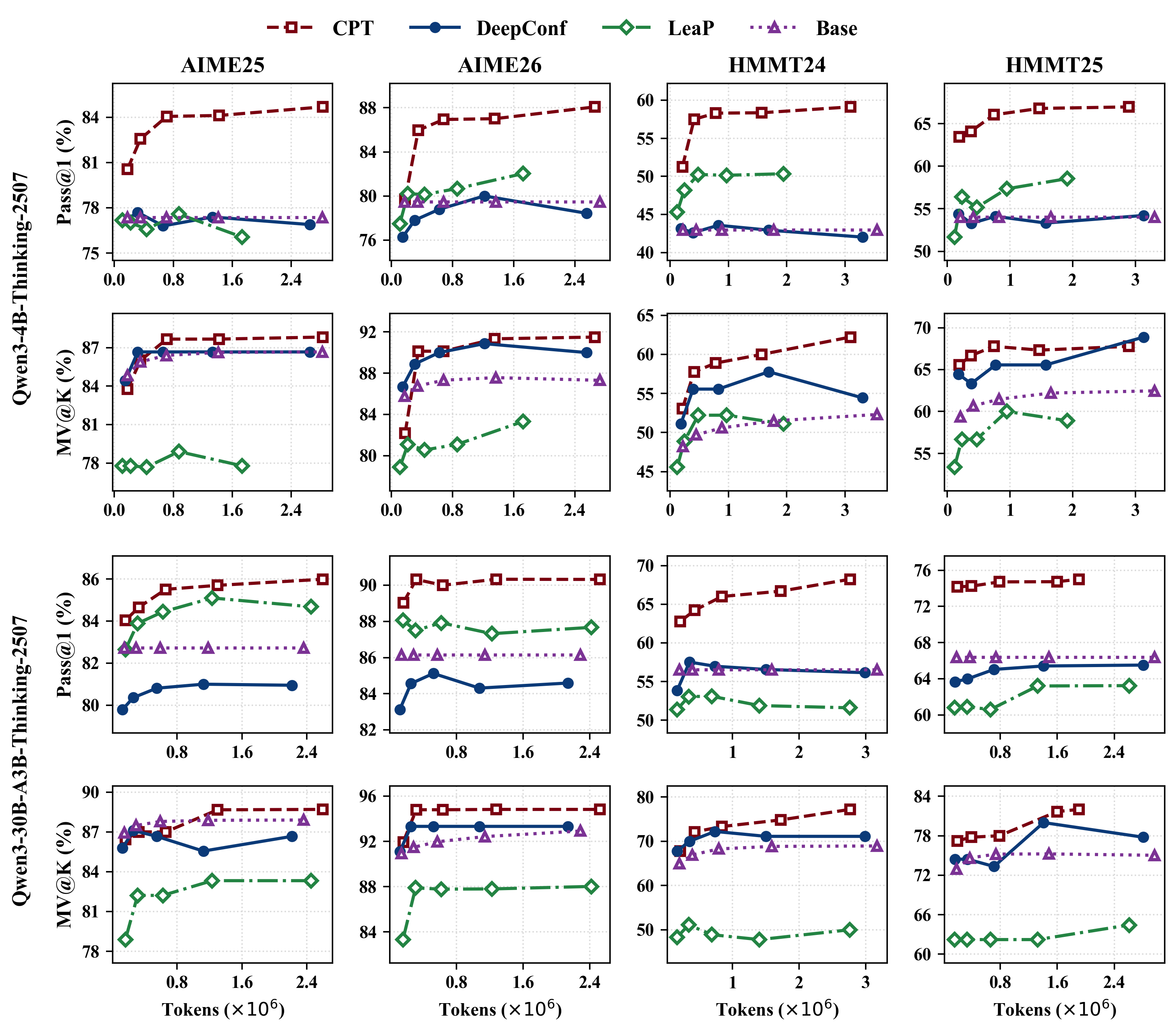}
\caption{Accuracy--Tokens comparison across models and benchmarks under different rollout budgets.
}
\label{fig:main_results_tokens}
\end{figure*}

\begin{figure*}[t]
\centering
\includegraphics[width=1.0\textwidth]{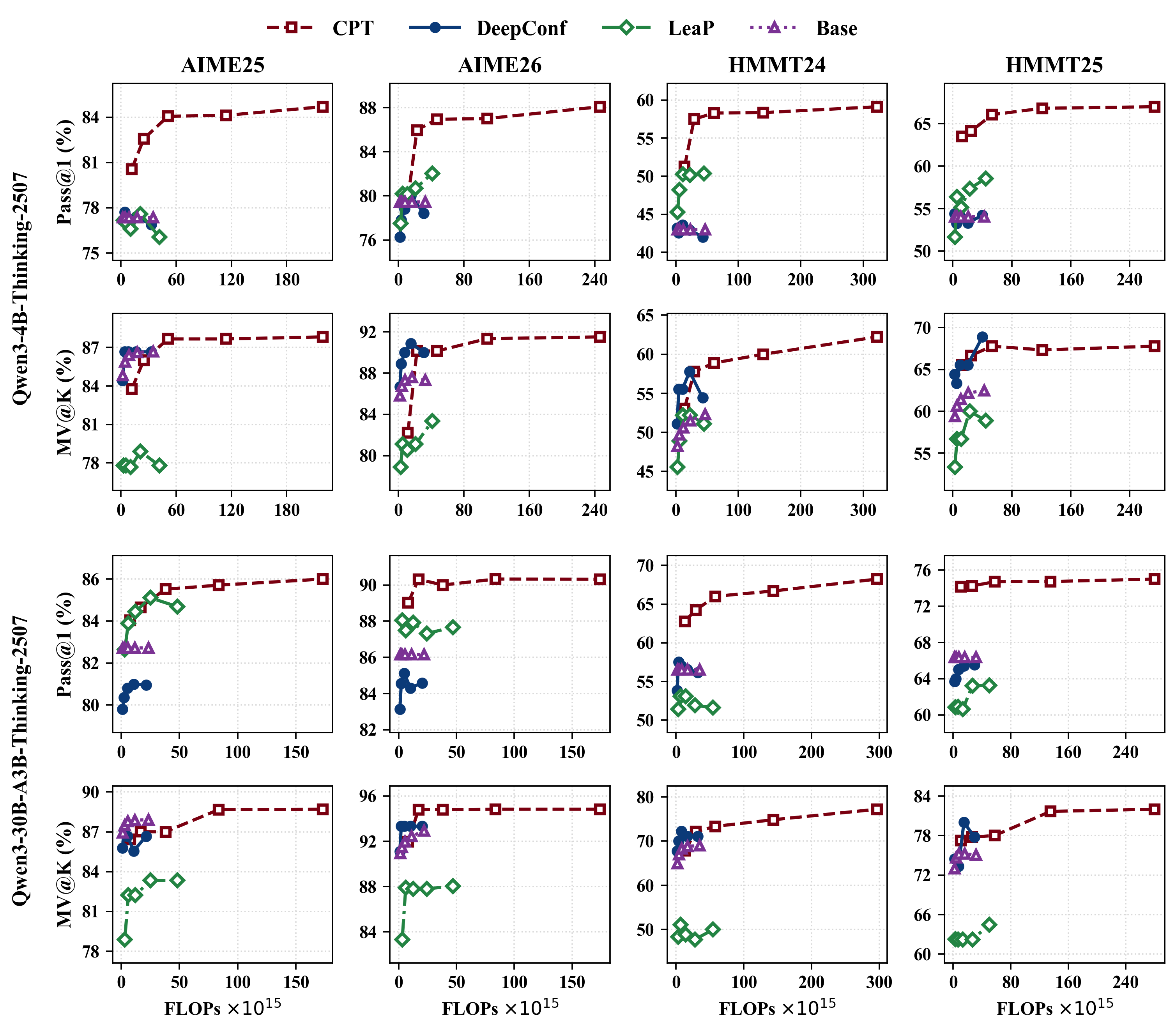}
\caption{Accuracy--FLOPs comparison across models and benchmarks under different rollout budgets. See Appendix~\ref{app:flops_estimation} for details on FLOPs estimation.
}
\label{fig:main_results_flops}
\end{figure*}

\section{Analysis of Generated Tokens and FLOPs}
\label{app:flops_tokens_analysis}

\subsection{Additional Accuracy--Cost Results}
\label{app:additional_cost_results}

We provide supplementary cost-efficiency results under generated-token and FLOPs metrics.
The token-based comparison (Figure~\ref{fig:main_results_tokens}) complements our latency evaluation by measuring the amount of generated computation.
CPT generally achieves a stronger accuracy--tokens trade-off, suggesting that search-time information sharing reduces redundant decoding caused by repeated rediscovery across parallel branches.
In contrast, the FLOPs comparison (Figure~\ref{fig:main_results_flops}) reflects the additional prefilling computation introduced by prompt-level broadcasts.
Since updating the shared-information section changes the input context, standard decoding implementations may need to recompute the prefix, making CPT less favorable under a pure FLOPs budget.
These results provide a fuller view of the computational trade-offs of CPT: it improves latency-aware and token-efficient parallel exploration, while FLOPs efficiency remains an important direction for future cache-aware implementations.

\subsection{FLOPs Estimation}
\label{app:flops_estimation}

To explicitly quantify the computational efficiency of different inference strategies, we estimate the Floating Point Operations (FLOPs) following the methodology used in the NVIDIA NeMo framework~\citep{nemo-rl}. 

\paragraph{Unified FLOPs Estimator.}
We decompose an inference process into a set of model requests, denoted by $\mathcal{R}$. For each request $r \in \mathcal{R}$, we define $S_r$ as the effective linear token term, which is used to estimate the cost of linear projections, MLP layers, and vocabulary projection. We further define $Q_r$ as the effective quadratic attention term, which captures the cost of causal self-attention. The aggregated statistics are:
\begin{align}
    S &= \sum_{r \in \mathcal{R}} S_r, \\
    Q &= \sum_{r \in \mathcal{R}} Q_r.
\end{align}

Let $h$ denote the hidden size, $N_{\ell}$ the number of Transformer layers, $A$ the number of attention heads, $G$ the number of key-value heads, $V$ the vocabulary size, and $d_{\mathrm{ff}}^{\mathrm{active}}$ the active FFN intermediate dimension. For dense models, $d_{\mathrm{ff}}^{\mathrm{active}}$ is the standard FFN intermediate dimension; for MoE models, it is multiplied by the number of activated experts per token.

The total FLOPs are decomposed into attention, MLP, and vocabulary projection costs:
\begin{align}
    \mathrm{FLOPs}_{\mathrm{attn}}
    &=
    2N_{\ell}h^2
    \left[
        \left(2\frac{G}{A}+2\right)S
        +
        \frac{Q}{h}
    \right], \\
    \mathrm{FLOPs}_{\mathrm{mlp}}
    &=
    2N_{\ell}h
    \left(3d_{\mathrm{ff}}^{\mathrm{active}}\right)S, \\
    \mathrm{FLOPs}_{\mathrm{vocab}}
    &=
    2hVS.
\end{align}
Therefore, the total computational cost is:
\begin{equation}
\begin{aligned}
\mathrm{FLOPs}_{\mathrm{total}}
&=
\mathrm{FLOPs}_{\mathrm{attn}}
+
\mathrm{FLOPs}_{\mathrm{mlp}} \\
&\quad+
\mathrm{FLOPs}_{\mathrm{vocab}} .
\end{aligned}
\end{equation}

\paragraph{Full-prefill Generation.}
For a standard independent generation request, let $p_r$ denote the number of prompt tokens and $o_r$ denote the number of generated output tokens. Since each request processes the full prompt before autoregressive decoding, we define:
\begin{align}
    S_r &= p_r + o_r, \\
    Q_r &= (p_r + o_r)^2.
\end{align}

\paragraph{CPT.}
CPT consists of two types of computation: normal reasoning generation and blackboard information extraction. Let $\mathcal{R}_{\mathrm{gen}}$ denote the set of normal reasoning requests, and $\mathcal{R}_{\mathrm{ext}}$ denote the set of information extraction requests.

For a normal reasoning request $r \in \mathcal{R}_{\mathrm{gen}}$, let $L_r$ be its effective sequence length. We use:
\begin{align}
    S_r &= L_r, \\
    Q_r &= L_r^2.
\end{align}

For an information extraction request $r \in \mathcal{R}_{\mathrm{ext}}$, we estimate its cost under a cached-decoding assumption. Let $p_r$ be the length of the extraction context and $o_r$ be the number of tokens generated during extraction. Since the context has already been cached, the linear term only counts newly generated tokens:
\begin{align}
    S_r &= o_r.
\end{align}
The attention term is:
\begin{align}
    Q_r &= 2p_ro_r + o_r^2.
\end{align}

Thus, the overall token-dependent statistics for CPT are:
\begin{align}
    S_{\mathrm{CPT}}
    &=
    \sum_{r \in \mathcal{R}_{\mathrm{gen}}} L_r
    +
    \sum_{r \in \mathcal{R}_{\mathrm{ext}}} o_r, \\
    Q_{\mathrm{CPT}}
    &=
    \sum_{r \in \mathcal{R}_{\mathrm{gen}}} L_r^2
    +
    \sum_{r \in \mathcal{R}_{\mathrm{ext}}}
    \left(2p_ro_r + o_r^2\right).
\end{align}

\paragraph{LeaP.}
LeaP performs inference through multiple consecutive segments, including normal reasoning, intermediate summarization, comment insertion, and final-answer generation. Since these segments are generated sequentially within the same trajectory, we estimate its cost using an incremental KV-cache formulation.

For segment $r$, let $C_r$ denote the number of tokens cached before this segment, and let
\begin{equation}
    x_r = p_r + o_r
\end{equation}
denote the number of newly processed tokens in the current segment, where $p_r$ is the newly inserted prompt length and $o_r$ is the generated output length. The effective statistics are:
\begin{align}
    S_r &= x_r, \\
    Q_r &= x_r(2C_r + x_r).
\end{align}

Therefore, the overall statistics for LeaP are:
\begin{align}
    S_{\mathrm{LeaP}}
    &=
    \sum_{r \in \mathcal{R}_{\mathrm{LeaP}}} x_r, \\
    Q_{\mathrm{LeaP}}
    &=
    \sum_{r \in \mathcal{R}_{\mathrm{LeaP}}}
    x_r(2C_r+x_r).
\end{align}

\paragraph{DeepConf.}
DeepConf is treated as a set of independent sampled completions. For problem $q$, let $p_q$ denote the prompt length after applying the chat template, and let $o_{q,i}$ denote the generated length of the $i$-th completion. Since each completion is sampled independently, each one is counted as a full-prefill request:
\begin{align}
    S_{q,i} &= p_q + o_{q,i}, \\
    Q_{q,i} &= (p_q + o_{q,i})^2.
\end{align}

The overall statistics for DeepConf are:
\begin{align}
    S_{\mathrm{DeepConf}}
    &=
    \sum_q \sum_i
    \left(p_q + o_{q,i}\right), \\
    Q_{\mathrm{DeepConf}}
    &=
    \sum_q \sum_i
    \left(p_q + o_{q,i}\right)^2.
\end{align}

Finally, for each method, the corresponding $S$ and $Q$ are substituted into the unified FLOPs estimator to obtain the total computational cost.

\section{Theory: Collaborative Sharing Improves Global Decision Information}
\label{sec:theory}

We provide an information-theoretic analysis of collaborative parallel thinking.
The analysis abstracts away implementation details such as when to broadcast or how many items to broadcast, and focuses on the core question:
how does sharing intermediate search information affect the amount of \emph{global decision information} obtained from parallel exploration?

Our main observation is that parallel reasoning paths may produce substantial path-wise information, but the information useful to the final decision is not simply the sum of path-wise information across paths.
Different paths may rediscover the same conclusions, constraints, or failure patterns, causing overlapping information to be counted multiple times in isolated exploration.
Collaborative sharing can improve test-time scaling efficiency by pooling path-private information into a shared context, reducing redundant local information, and converting it into globally useful decision information.

\subsection{Decision information in parallel reasoning}

Fix a query \(q\), and let \(Y\) denote the target answer under an epistemic distribution over candidate answers.
Although the ground-truth answer is deterministic, the solver is uncertain about it before reasoning; our information-theoretic quantities are defined with respect to this uncertainty.
At some stage of inference, let \(S\) denote the information state already available to the system.
Suppose \(K\) parallel reasoning paths continue exploration and produce extracted search information
\[
Z_{1:K}=(Z_1,\dots,Z_K),
\]
where \(Z_i\) denotes the information extracted from path \(i\), such as intermediate conclusions, constraints, counterexamples, or failure patterns.
We interpret \(Z_i\) as residual search information beyond the current state \(S\).

We measure the decision value of information by its mutual information with the target answer.
All entropy and mutual-information quantities below are assumed to be finite; the Gaussian example later uses the continuous analogue.

\begin{definition}[Pooled global decision information]
Given the current information state \(S\), the pooled global decision information gained from the next parallel exploration step is
\[
\mathcal{G}(Z_{1:K}\mid S,q)
:=
I(Y;Z_{1:K}\mid S,q).
\]
This quantity measures the decision information obtained after pooling the extracted information from all paths into a shared context.
\end{definition}

\begin{definition}[Aggregate path-wise information]
The aggregate path-wise information produced by the \(K\) paths is
\[
\mathcal{L}(Z_{1:K}\mid S,q)
:=
\sum_{i=1}^K I(Y;Z_i\mid S,q).
\]
This quantity counts the decision information produced by each path separately, so overlapping information across paths is counted multiple times.
\end{definition}

The quantity \(\mathcal{L}\) measures how much decision-relevant information each path appears to produce in isolation.
However, it can overestimate the amount of new information available to the global decision process, because different paths may produce overlapping information.

\begin{definition}[Redundant local information]
We define the redundant local information as
\[
\mathcal{R}(Z_{1:K}\mid S,q)
:=
\mathcal{L}(Z_{1:K}\mid S,q)
-
\mathcal{G}(Z_{1:K}\mid S,q).
\]
Intuitively, \(\mathcal{R}\) measures the amount of path-wise decision information that is repeatedly counted across isolated paths but does not translate into additional pooled global decision information.
\end{definition}

\begin{assumption}[Redundancy-dominated extracted information]
\label{ass:redundancy_dominated}
For the extracted search information considered in this analysis, pooling information across paths does not create more decision information than the sum of its path-wise contributions:
\[
\mathcal{G}(Z_{1:K}\mid S,q)
\le
\mathcal{L}(Z_{1:K}\mid S,q).
\]
Equivalently,
\[
\mathcal{R}(Z_{1:K}\mid S,q)\ge 0.
\]
This assumption abstracts the setting where extracted information units are self-contained intermediate conclusions, constraints, counterexamples, or failure patterns, and the main inefficiency comes from repeated or overlapping information across paths.
\end{assumption}

\paragraph{Decision value under log-loss.}
The mutual-information objective has a standard decision-theoretic interpretation.
If the final predictor is evaluated by log-loss, then the optimal expected loss before observing additional information \(A\) is
\[
H(Y\mid S,q),
\]
whereas the optimal expected loss after observing \(A\) is
\[
H(Y\mid A,S,q).
\]
Therefore, the expected improvement from observing \(A\) is
\[
H(Y\mid S,q)-H(Y\mid A,S,q)
=
I(Y;A\mid S,q).
\]
Thus, maximizing pooled global decision information is equivalent to maximizing the expected log-loss reduction of an optimal Bayesian decision rule.

\subsection{From path-wise information to pooled global information}

The definitions above imply a simple but important decomposition:
\[
\mathcal{L}(Z_{1:K}\mid S,q)
=
\mathcal{G}(Z_{1:K}\mid S,q)
+
\mathcal{R}(Z_{1:K}\mid S,q).
\]
Equivalently,
\[
\mathcal{G}(Z_{1:K}\mid S,q)
=
\mathcal{L}(Z_{1:K}\mid S,q)
-
\mathcal{R}(Z_{1:K}\mid S,q).
\]
In words, aggregate path-wise information equals pooled global decision information plus redundant local information.
This decomposition captures the central inefficiency of independent parallel exploration:
even if each path produces useful local information, repeated discoveries across paths can inflate the path-wise information total without increasing the pooled global decision information.

We next relate redundant local information to a standard information-theoretic dependence measure.

\begin{definition}[Conditional total correlation]
For random variables \(Z_{1:K}\) and conditioning variable \(A\), the conditional total correlation is
\[
\mathrm{TC}(Z_{1:K}\mid A)
:=
\sum_{i=1}^K H(Z_i\mid A)
-
H(Z_{1:K}\mid A).
\]
It measures statistical dependence among \(Z_1,\dots,Z_K\) given \(A\).
\end{definition}

\begin{proposition}[Redundant information and total correlation]
\label{prop:tc_redundancy}
For any joint distribution over \(Y,Z_{1:K},S\), conditioned on the fixed query \(q\), we have
\[
\begin{aligned}
\mathcal{R}(Z_{1:K}\mid S,q)
&=
\mathrm{TC}(Z_{1:K}\mid S,q)\\
&\quad -
\mathrm{TC}(Z_{1:K}\mid Y,S,q).
\end{aligned}
\]
In particular, under the clean model where \(Z_1,\dots,Z_K\) are mutually conditionally independent given \(Y,S,q\), we have
\[
\mathcal{R}(Z_{1:K}\mid S,q)
=
\mathrm{TC}(Z_{1:K}\mid S,q)
\ge 0.
\]
\end{proposition}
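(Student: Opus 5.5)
The plan is to expand every mutual-information term into conditional entropies and then regroup the result into the two total-correlation expressions. Throughout, everything is conditioned on \(S\) and \(q\). To lighten notation, write \(A=(S,q)\); since \(q\) is fixed, this only means all quantities are evaluated under the conditional law given \(q\). All entropies are finite by the standing assumption of this section, so every subtraction below is legitimate.

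First, expand the path-wise terms. By the chain rule for conditional mutual information, \(I(Y;Z_i\mid A)=H(Z_i\mid A)-H(Z_i\mid Y,A)\). Summing over \(i\) gives
\[
\mathcal{L}(Z_{1:K}\mid S,q)=\sum_{i=1}^K H(Z_i\mid A)-\sum_{i=1}^K H(Z_i\mid Y,A).
\]

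Next, expand the pooled term. The joint term satisfies
\[
\mathcal{G}(Z_{1:K}\mid S,q)=I(Y;Z_{1:K}\mid A)=H(Z_{1:K}\mid A)-H(Z_{1:K}\mid Y,A).
\]

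Now subtract to get \(\mathcal{R}=\mathcal{L}-\mathcal{G}\), and regroup the four sums into two brackets:
\[
\mathcal{R}=\Big[\sum_i H(Z_i\mid A)-H(Z_{1:K}\mid A)\Big]-\Big[\sum_i H(Z_i\mid Y,A)-H(Z_{1:K}\mid Y,A)\Big].
\]
The first bracket is \(\mathrm{TC}(Z_{1:K}\mid S,q)\) and the second is \(\mathrm{TC}(Z_{1:K}\mid Y,S,q)\), directly by the definition of conditional total correlation applied with conditioning variable \(A\) and \((Y,A)\) respectively. This proves the identity.

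For the special case, assume \(Z_1,\dots,Z_K\) are mutually conditionally independent given \((Y,S,q)\). Then the joint conditional entropy factorizes: \(H(Z_{1:K}\mid Y,A)=\sum_i H(Z_i\mid Y,A)\), which follows from the chain rule because each \(H(Z_i\mid Z_{<i},Y,A)=H(Z_i\mid Y,A)\). Hence \(\mathrm{TC}(Z_{1:K}\mid Y,S,q)=0\), and so \(\mathcal{R}=\mathrm{TC}(Z_{1:K}\mid S,q)\).

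It remains to show this is nonnegative. By the chain rule, \(H(Z_{1:K}\mid A)=\sum_i H(Z_i\mid Z_{<i},A)\). Since conditioning does not increase entropy, \(H(Z_i\mid Z_{<i},A)\le H(Z_i\mid A)\) for each \(i\), so \(\mathrm{TC}(Z_{1:K}\mid A)\ge 0\).

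The only delicate point is bookkeeping: the same conditioning set must be used consistently across the entropy terms, and finiteness must be invoked so that no \(\infty-\infty\) expression arises. For the Gaussian analogue, the same steps go through with differential entropies, since both the chain rule and the conditioning inequality still hold there.
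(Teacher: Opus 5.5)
Your proposal is correct and follows essentially the same route as the paper: expand $\mathcal{R}=\mathcal{L}-\mathcal{G}$ into conditional entropies, regroup into $\mathrm{TC}(Z_{1:K}\mid S,q)-\mathrm{TC}(Z_{1:K}\mid Y,S,q)$, and use conditional independence to make the second term vanish. The only difference is that you prove nonnegativity of total correlation via the chain rule and ``conditioning does not increase entropy,'' whereas the paper simply cites it as a known fact.
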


\begin{proof}
Let \(A=(S,q)\), \(B=(Y,S,q)\), and write \(\sum_i\) for \(\sum_{i=1}^K\).
By the definition of \(\mathcal{R}\),
\[
\begin{aligned}
\mathcal{R}
&=
\sum_i I(Y;Z_i\mid A)
-
I(Y;Z_{1:K}\mid A)\\
&=
\sum_i H(Z_i\mid A)
-
H(Z_{1:K}\mid A)\\
&\quad -
\sum_i H(Z_i\mid B)
+
H(Z_{1:K}\mid B)\\
&=
\mathrm{TC}(Z_{1:K}\mid A)
-
\mathrm{TC}(Z_{1:K}\mid B).
\end{aligned}
\]
Restoring \(A=(S,q)\) and \(B=(Y,S,q)\) gives
\[
\begin{aligned}
\mathcal{R}(Z_{1:K}\mid S,q)
&=
\mathrm{TC}(Z_{1:K}\mid S,q)\\
&\quad -
\mathrm{TC}(Z_{1:K}\mid Y,S,q).
\end{aligned}
\]
If \(Z_1,\dots,Z_K\) are mutually conditionally independent given \(Y,S,q\), then
\[
H(Z_{1:K}\mid Y,S,q)
=
\sum_{i=1}^K H(Z_i\mid Y,S,q),
\]
so \(\mathrm{TC}(Z_{1:K}\mid Y,S,q)=0\). Hence
\[
\mathcal{R}(Z_{1:K}\mid S,q)
=
\mathrm{TC}(Z_{1:K}\mid S,q)
\ge 0,
\]
since total correlation is nonnegative.
\end{proof}

Proposition~\ref{prop:tc_redundancy} shows that, under a clean conditional-independence model, redundant local information is exactly the total correlation among path-level information variables.
Thus, when different paths repeatedly discover similar information, the aggregate path-wise information can be large while the pooled global decision information remains limited.

\subsection{When does collaborative sharing help?}

We now compare independent parallel exploration with collaborative exploration.
Let
\[
Z^{\mathrm{ind}}_{1:K}
\]
denote the information variables produced by independent parallel paths, and let
\[
Z^{\mathrm{col}}_{1:K}
\]
denote those produced under collaborative sharing.
Both are conditioned on the same current state \(S\) and query \(q\).

Define
\[
\begin{aligned}
\mathcal{G}_{\mathrm{ind}}
&:=
\mathcal{G}(Z^{\mathrm{ind}}_{1:K}\mid S,q),\\
\mathcal{G}_{\mathrm{col}}
&:=
\mathcal{G}(Z^{\mathrm{col}}_{1:K}\mid S,q),
\end{aligned}
\]
and define
\(\mathcal{L}_{\mathrm{ind}},\mathcal{L}_{\mathrm{col}}\)
and
\(\mathcal{R}_{\mathrm{ind}},\mathcal{R}_{\mathrm{col}}\)
analogously.

The following result is a decomposition rather than an unconditional dominance guarantee.

\begin{theorem}[Collaborative gain decomposition]
\label{thm:collab_gain}
The difference in pooled global decision information between collaborative and independent exploration satisfies
\[
\begin{aligned}
\mathcal{G}_{\mathrm{col}}
-
\mathcal{G}_{\mathrm{ind}}
&=
\left(
\mathcal{L}_{\mathrm{col}}
-
\mathcal{L}_{\mathrm{ind}}
\right)\\
&\quad+
\left(
\mathcal{R}_{\mathrm{ind}}
-
\mathcal{R}_{\mathrm{col}}
\right).
\end{aligned}
\]
Consequently, collaborative sharing improves pooled global decision information whenever
\[
\mathcal{R}_{\mathrm{ind}}
-
\mathcal{R}_{\mathrm{col}}
\ge
\mathcal{L}_{\mathrm{ind}}
-
\mathcal{L}_{\mathrm{col}}.
\]
In particular, if collaboration reduces redundant local information without reducing aggregate path-wise information, then
\[
\mathcal{G}_{\mathrm{col}}
\ge
\mathcal{G}_{\mathrm{ind}}.
\]
\end{theorem}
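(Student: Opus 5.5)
The plan is to derive Theorem~\ref{thm:collab_gain} directly from the definition of redundant local information. No further information-theoretic machinery is needed, because the statement is an algebraic identity plus two immediate corollaries.

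First, apply the decomposition \(\mathcal{G}=\mathcal{L}-\mathcal{R}\) separately to the independent variables \(Z^{\mathrm{ind}}_{1:K}\) and the collaborative variables \(Z^{\mathrm{col}}_{1:K}\). Both are conditioned on the same \(S\) and \(q\), and the identity holds for any joint distribution. This gives \(\mathcal{G}_{\mathrm{col}}=\mathcal{L}_{\mathrm{col}}-\mathcal{R}_{\mathrm{col}}\) and \(\mathcal{G}_{\mathrm{ind}}=\mathcal{L}_{\mathrm{ind}}-\mathcal{R}_{\mathrm{ind}}\). Subtracting the second from the first and regrouping yields
\[
\mathcal{G}_{\mathrm{col}}-\mathcal{G}_{\mathrm{ind}}
=(\mathcal{L}_{\mathrm{col}}-\mathcal{L}_{\mathrm{ind}})+(\mathcal{R}_{\mathrm{ind}}-\mathcal{R}_{\mathrm{col}}).
\]
All these quantities are finite by the standing assumption that every entropy and mutual-information quantity is finite, so the subtraction is legitimate. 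This finiteness check is the only point that needs care.

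Second, the sufficient condition follows by rearranging. If \(\mathcal{R}_{\mathrm{ind}}-\mathcal{R}_{\mathrm{col}}\ge\mathcal{L}_{\mathrm{ind}}-\mathcal{L}_{\mathrm{col}}\), then the right-hand side of the identity is nonnegative, so \(\mathcal{G}_{\mathrm{col}}\ge\mathcal{G}_{\mathrm{ind}}\).

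Third, for the special case, read ``reduces redundant local information without reducing aggregate path-wise information'' as the pair of conditions \(\mathcal{R}_{\mathrm{col}}\le\mathcal{R}_{\mathrm{ind}}\) and \(\mathcal{L}_{\mathrm{col}}\ge\mathcal{L}_{\mathrm{ind}}\). Under these conditions both bracketed terms in the identity are nonnegative, so their sum is nonnegative and the claim follows.

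Assumption~\ref{ass:redundancy_dominated} and Proposition~\ref{prop:tc_redundancy} are not needed for the identity itself. They could be cited in a closing remark to interpret the \(\mathcal{R}\) terms as total-correlation differences.
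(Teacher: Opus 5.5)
Your proposal is correct and follows essentially the paper's proof: write $\mathcal{G}=\mathcal{L}-\mathcal{R}$ for each regime, subtract, and read off the sufficient condition. Your finiteness remark and your explicit treatment of the ``in particular'' case as $\mathcal{R}_{\mathrm{col}}\le\mathcal{R}_{\mathrm{ind}}$, $\mathcal{L}_{\mathrm{col}}\ge\mathcal{L}_{\mathrm{ind}}$ are small details that the paper leaves implicit.
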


\begin{proof}
By the path-wise to pooled decomposition,
\[
\mathcal{G}_{\mathrm{col}}
=
\mathcal{L}_{\mathrm{col}}
-
\mathcal{R}_{\mathrm{col}},
\qquad
\mathcal{G}_{\mathrm{ind}}
=
\mathcal{L}_{\mathrm{ind}}
-
\mathcal{R}_{\mathrm{ind}}.
\]
Subtracting the two identities gives
\[
\begin{aligned}
\mathcal{G}_{\mathrm{col}}
-
\mathcal{G}_{\mathrm{ind}}
&=
\left(
\mathcal{L}_{\mathrm{col}}
-
\mathcal{R}_{\mathrm{col}}
\right)
-
\left(
\mathcal{L}_{\mathrm{ind}}
-
\mathcal{R}_{\mathrm{ind}}
\right)\\
&=
\left(
\mathcal{L}_{\mathrm{col}}
-
\mathcal{L}_{\mathrm{ind}}
\right)
+
\left(
\mathcal{R}_{\mathrm{ind}}
-
\mathcal{R}_{\mathrm{col}}
\right).
\end{aligned}
\]
Thus \(\mathcal{G}_{\mathrm{col}}\ge \mathcal{G}_{\mathrm{ind}}\) whenever
\[
\mathcal{R}_{\mathrm{ind}}
-
\mathcal{R}_{\mathrm{col}}
\ge
\mathcal{L}_{\mathrm{ind}}
-
\mathcal{L}_{\mathrm{col}}.
\]
\end{proof}

Theorem~\ref{thm:collab_gain} formalizes the core intuition behind collaborative parallel thinking.
Sharing can help not merely by producing more path-wise information, but by reducing redundant local information that prevents path-wise information from becoming pooled global decision information.
The theorem also makes clear that sharing is not unconditionally beneficial: if shared context reduces path-level exploration too much, the decrease in aggregate path-wise information may offset the reduction in redundant local information.

\subsection{A Gaussian redundancy model}

The previous results are distribution-free identities.
We now give a stylized Gaussian model to illustrate how cross-path dependence reduces the effective information width of parallel exploration.
This model is not intended as a literal model of LLM reasoning; rather, it isolates the effect of correlated path-level information on global decision information.

For simplicity, omit conditioning on \(S,q\).
Let
\[
Y\sim \mathcal{N}(0,1)
\]
be a scalar latent decision signal.
Each path produces a noisy residual information signal
\[
Z_i = Y + U + V_i,\qquad i=1,\dots,K,
\]
where
\[
U\sim \mathcal{N}(0,\sigma_c^2)
\]
is a common noise component shared across paths, and
\[
V_i\sim \mathcal{N}(0,\sigma_u^2)
\]
is path-specific noise.
Assume \(U,V_1,\dots,V_K\) are mutually independent and independent of \(Y\).
Assume \(\sigma_u^2>0\) and \(\sigma_c^2\ge 0\); the degenerate case \(\sigma_u^2=0\) is understood by continuity.

The common noise \(U\) represents shared uncertainty or common bias induced by repeated derivations, similar failure modes, or correlated search behavior across paths.
The path-specific noise \(V_i\) represents idiosyncratic uncertainty that can be averaged out across independent paths.

Let
\[
\sigma^2 := \sigma_u^2+\sigma_c^2,
\qquad
\rho := \frac{\sigma_c^2}{\sigma_u^2+\sigma_c^2}.
\]
Then \(\sigma^2\) is the marginal noise variance of each path, and \(\rho\) is the pairwise noise correlation.

\begin{proposition}[Effective parallel width under correlated search information]
\label{prop:gaussian_eff_width}
Under the Gaussian redundancy model,
\[
I(Y;Z_{1:K})
=
\frac{1}{2}
\log
\left(
1+
\frac{K}{\sigma_u^2+K\sigma_c^2}
\right).
\]
Equivalently,
\[
I(Y;Z_{1:K})
=
\frac{1}{2}
\log
\left(
1+
\frac{K_{\mathrm{eff}}}{\sigma^2}
\right),
\]
where
\[
K_{\mathrm{eff}}
:=
\frac{K}{1+(K-1)\rho}.
\]
\end{proposition}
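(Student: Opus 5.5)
The plan is to reduce the vector observation \(Z_{1:K}\) to a scalar sufficient statistic and then apply the standard Gaussian-channel formula.

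\textbf{Step 1: sufficient statistic.} Write \(Z_i = Y + U + V_i\). Conditioned on \(Y\), the vector \(Z_{1:K}\) is Gaussian with mean \(Y\mathbf{1}\) and covariance \(\Sigma=\sigma_u^2 I+\sigma_c^2\mathbf{1}\mathbf{1}^\top\). This covariance is invariant under permutations of coordinates. Hence the log-likelihood ratio in \(Y\) depends on \(Z\) only through \(\mathbf{1}^\top\Sigma^{-1}Z\). Since \(\Sigma^{-1}\mathbf{1}\propto\mathbf{1}\), this is proportional to the sample mean \(\bar Z=\frac1K\sum_i Z_i\). So \(\bar Z\) is sufficient for \(Y\), and \(I(Y;Z_{1:K})=I(Y;\bar Z)\).

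A cleaner route avoids likelihoods. Decompose \(Z_{1:K}\) into \(\bar Z\) and the deviations \(D_i=Z_i-\bar Z=V_i-\bar V\). The vector \(D\) is independent of \((Y,U,\bar V)\): it is jointly Gaussian with them and uncorrelated with \(\bar V\) by the usual sample-mean argument. By the chain rule,
\(I(Y;\bar Z,D)=I(Y;\bar Z)+I(Y;D\mid\bar Z)\).
The second term vanishes because \(D\) is independent of \((Y,\bar Z)\). Since \((\bar Z,D)\) is an invertible linear function of \(Z_{1:K}\), this gives the reduction.

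\textbf{Step 2: scalar channel.} We have \(\bar Z = Y + U + \bar V\), where the noise \(N:=U+\bar V\) is independent of \(Y\) and satisfies \(N\sim\mathcal N(0,\sigma_c^2+\sigma_u^2/K)\). For the Gaussian channel with \(Y\sim\mathcal N(0,1)\),
\(I(Y;\bar Z)=h(\bar Z)-h(N)=\frac12\log\bigl(1+1/\mathrm{Var}(N)\bigr)\).
Substituting \(1/\mathrm{Var}(N)=K/(\sigma_u^2+K\sigma_c^2)\) yields the first formula.

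\textbf{Step 3: algebraic rewrite.} Compute
\(\sigma^2(1+(K-1)\rho)=\sigma_u^2+\sigma_c^2+(K-1)\sigma_c^2=\sigma_u^2+K\sigma_c^2\).
Therefore \(K_{\mathrm{eff}}/\sigma^2=K/(\sigma_u^2+K\sigma_c^2)\), which gives the equivalent form.

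\textbf{Main obstacle.} The only nontrivial step is justifying the sufficiency reduction in Step 1. I will use the independence-of-deviations argument rather than the likelihood argument, since it is elementary and also covers \(\sigma_c^2=0\) directly.

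As a cross-check, one can compute \(h(Z_{1:K})-h(Z_{1:K}\mid Y)=\frac12\log\det(\Sigma+\mathbf{1}\mathbf{1}^\top)/\det\Sigma\) using the matrix determinant lemma: \(\det(\Sigma+\mathbf{1}\mathbf{1}^\top)/\det\Sigma=1+\mathbf{1}^\top\Sigma^{-1}\mathbf{1}\). With \(\Sigma^{-1}\mathbf{1}=\mathbf{1}/(\sigma_u^2+K\sigma_c^2)\), this gives the same result. The case \(\sigma_u^2=0\) follows by continuity, as stated in the setup.
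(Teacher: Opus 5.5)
Your proof is correct, but your main route is different from the paper's. The paper never reduces to a scalar. It writes $Z=\mathbf{1}Y+\varepsilon$ with $\Sigma=\sigma_u^2 I_K+\sigma_c^2\mathbf{1}\mathbf{1}^\top$, quotes the linear-Gaussian channel formula $I(Y;Z)=\frac12\log(1+\mathbf{1}^\top\Sigma^{-1}\mathbf{1})$, and computes $\Sigma^{-1}$ in full by Sherman--Morrison. Your cross-check is essentially that proof. The determinant lemma supplies the channel formula the paper takes as known, and the eigenvector identity $\Sigma\mathbf{1}=(\sigma_u^2+K\sigma_c^2)\mathbf{1}$ replaces the full inverse.

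Your primary argument instead shows that $\bar Z$ is sufficient, because the deviations $D_i=V_i-\bar V$ are independent of $(Y,U,\bar V)$. It then applies the scalar channel $\bar Z=Y+N$ with $\mathrm{Var}(N)=\sigma_c^2+\sigma_u^2/K$.

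\begin{itemize}
\item \textbf{What your route buys.} It is transparent: all the information sits in the average, and the shared component $U$ does not average out. That unremovable term is exactly the noise floor $\sigma_c^2$ behind $K_{\mathrm{eff}}\to 1/\rho$. Because it never inverts $\Sigma$, it also handles the singular case $\sigma_u^2=0$, $\sigma_c^2>0$ directly, so your closing appeal to continuity is not needed there. Conversely, $\sigma_c^2=0$ is not a gain over the matrix route, since $\Sigma=\sigma_u^2 I_K$ is then invertible.
\item \textbf{What the paper's route buys.} It is more general. The formula $\frac12\log(1+\mathbf{1}^\top\Sigma^{-1}\mathbf{1})$ holds for any positive-definite noise covariance, so it extends verbatim to heterogeneous or non-exchangeable path noise. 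There the sufficient statistic becomes the weighted combination $\mathbf{1}^\top\Sigma^{-1}Z$, and your equal-weight deviation decomposition would have to be redone.
\end{itemize}

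One small precision: $Z\mapsto(\bar Z,D)$ is injective rather than invertible onto $\mathbb{R}^{K+1}$, since $\sum_i D_i=0$. This is harmless because $Z=\bar Z\mathbf{1}+D$, or you can keep only $D_1,\dots,D_{K-1}$.
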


\begin{proof}
Let \(Z=(Z_1,\dots,Z_K)^\top\) and let \(\mathbf{1}\in\mathbb{R}^K\) be the all-one vector.
The observation model can be written as
\[
Z=\mathbf{1}Y+\varepsilon,
\]
where
\[
\varepsilon = U\mathbf{1} + (V_1,\dots,V_K)^\top
\]
is Gaussian noise with covariance
\[
\Sigma
=
\sigma_u^2 I_K
+
\sigma_c^2 \mathbf{1}\mathbf{1}^\top.
\]
For a scalar Gaussian signal observed through a linear Gaussian channel,
\[
I(Y;Z)
=
\frac{1}{2}
\log
\left(
1+
\mathbf{1}^\top \Sigma^{-1}\mathbf{1}
\right).
\]
By the Sherman--Morrison formula,
\[
\begin{aligned}
\Sigma^{-1}
&=
\frac{1}{\sigma_u^2}I_K
 -
\frac{\sigma_c^2}
{\sigma_u^2(\sigma_u^2+K\sigma_c^2)}
\mathbf{1}\mathbf{1}^\top.
\end{aligned}
\]
Therefore,
\[
\mathbf{1}^\top\Sigma^{-1}\mathbf{1}
=
\tfrac{K}{\sigma_u^2}
-
\tfrac{\sigma_c^2 K^2}{\sigma_u^2(\sigma_u^2+K\sigma_c^2)}
=
\tfrac{K}{\sigma_u^2+K\sigma_c^2}.
\]
Thus,
\[
I(Y;Z_{1:K})
=
\frac{1}{2}
\log
\left(
1+
\frac{K}{\sigma_u^2+K\sigma_c^2}
\right).
\]
Using
\[
\sigma^2=\sigma_u^2+\sigma_c^2,
\qquad
\rho=\frac{\sigma_c^2}{\sigma^2},
\]
we have
\[
\sigma_u^2+K\sigma_c^2
=
\sigma^2\bigl(1+(K-1)\rho\bigr).
\]
Hence
\[
\frac{K}{\sigma_u^2+K\sigma_c^2}
=
\frac{1}{\sigma^2}
\cdot
\frac{K}{1+(K-1)\rho}.
\]
Defining
\[
K_{\mathrm{eff}}
:=
\frac{K}{1+(K-1)\rho}
\]
gives the claimed expression.
\end{proof}

Proposition~\ref{prop:gaussian_eff_width} shows that nominal parallel width \(K\) can be much larger than the effective information width \(K_{\mathrm{eff}}\).
When \(\rho=0\), the path-level noises are independent and
\[
K_{\mathrm{eff}}=K.
\]
In this case, all paths contribute complementary observations.
When \(\rho\to 1\),
\[
K_{\mathrm{eff}}\to 1,
\]
meaning that even many parallel paths behave like a single effective information source.
More generally, for fixed \(\rho>0\),
\[
\lim_{K\to\infty}K_{\mathrm{eff}}=\frac{1}{\rho},
\]
so increasing the number of paths has diminishing information returns when cross-path dependence is high.

This closed-form model illustrates why independent parallel sampling can be inefficient despite using many rollouts.
If different paths produce highly correlated residual search information, increasing \(K\) does not linearly increase global decision information.
In this model, collaboration is useful when it reduces the correlation of residual information that remains to be discovered after conditioning on the shared state.
This does not require raw generations to become more diverse; rather, already shared conclusions are removed from the residual search problem, so subsequent paths can spend less effort rediscovering the same information.

\lstdefinestyle{cptprompt}{
    basicstyle=\ttfamily\scriptsize,
    backgroundcolor=\color{gray!5},
    frame=single,
    rulecolor=\color{black},
    framerule=0.8pt,
    framesep=8pt,
    breaklines=true,
    breakatwhitespace=false,
    columns=fullflexible,
    keepspaces=true,
    showstringspaces=false,
    tabsize=2,
    aboveskip=0pt,
    belowskip=0pt
}

\section{Prompts}
\label{app:prompts}

We list the prompt templates used by our CPT implementation.
The worker prompt is used for each reasoning branch, while the blackboard-write prompt is used by the same policy model to distill reusable intermediate notes from partial branch transcripts.


\begin{figure*}[t!]
    \centering
    \begin{tcblisting}{
        colback=gray!5,
        colframe=black,
        boxrule=0.8pt,
        arc=2pt,
        left=10pt, right=10pt, top=10pt, bottom=10pt,
        title=\textbf{Default Mathematical Answer Prompt},
        listing only,
        listing options={
            basicstyle=\ttfamily\scriptsize,
            breaklines=true,
            columns=fullflexible
        }
    }
Please reason step by step, and put your final answer within \boxed{}.
    \end{tcblisting}
    \caption{\textbf{Default mathematical answer prompt.} This instruction is appended to the worker prompt in the default configuration.}
    \label{fig:default_math_prompt}
\end{figure*}

\begin{figure*}[t!]
    \centering
    \begin{tcblisting}{
        colback=gray!5,
        colframe=black,
        boxrule=0.8pt,
        arc=2pt,
        left=10pt, right=10pt, top=10pt, bottom=10pt,
        title=\textbf{CPT Worker Prompt},
        listing only,
        listing options={
            basicstyle=\ttfamily\scriptsize,
            breaklines=true,
            columns=fullflexible
        }
    }
You are an intelligent reasoning agent solving complex problems step-by-step.

You may occasionally receive external information in the format:
[BLACKBOARD BROADCAST]
...
[/BLACKBOARD BROADCAST]

The blackboard may contain two kinds of reusable intermediate notes:
- insight: potentially useful intermediate facts, relations, reductions, invariants, or local observations.
- pitfall: warnings about possible reasoning errors, unsafe operations, missing cases, or dead ends.

Rules:
1) Blackboard content is NOT part of the original problem statement; treat it only as optional intermediate guidance. It may help you adjust direction, notice useful relations, or avoid repeated mistakes, but it should never replace your own reasoning from the problem statement.

2) Do NOT blindly trust or copy any blackboard note. Treat insights as structural hypotheses rather than proven facts, and use them only after checking their conditions against the problem statement and your own derivation.

3) Be especially skeptical of numerical claims, overly strong claims, uniqueness claims, impossibility claims, or any note that looks like a direct conclusion rather than an intermediate reasoning aid. Do not rely on such content unless you independently derive it.

4) Treat pitfalls as warning signs, not absolute prohibitions. If a pitfall is relevant to your current path, slow down and check the missing condition, unsafe operation, failed assumption, or ignored case before deciding whether to continue or change direction.

If the blackboard conflicts with your current reasoning, re-check the disputed assumption or derivation instead of simply following either side. Maintain independent reasoning diversity: the blackboard should assist your reasoning, not force your reasoning to align with it.
    \end{tcblisting}
    \caption{\textbf{CPT worker prompt.} Each reasoning branch receives this instruction, together with the default mathematical answer prompt in Figure~\ref{fig:default_math_prompt}.}
    \label{fig:cpt_worker_prompt}
\end{figure*}





\begin{figure*}[ht]
    \centering
    \begin{tcblisting}{
        colback=gray!5,
        colframe=black,
        boxrule=0.8pt,
        arc=2pt,
        left=10pt, right=10pt, top=10pt, bottom=10pt,
        title=\textbf{Worker Input Serialization Template},
        listing only,
        listing options={
            basicstyle=\ttfamily\scriptsize,
            breaklines=true,
            columns=fullflexible
        }
    }
[system]
[BLACKBOARD BROADCAST]
{{ selected_blackboard_notes_or_empty }}
[/BLACKBOARD BROADCAST]

[user]
{{ cpt_worker_prompt }}

[user]
{{ original_problem }}

[assistant]
{{ branch_reasoning_continuation }}
    \end{tcblisting}
    \caption{\textbf{Worker input serialization template.} The blackboard broadcast occupies the leading system message so that shared notes always appear before the branch continues its private reasoning trace.}
    \label{fig:cpt_worker_serialization}
\end{figure*}

\begin{figure*}[t!]
    \centering
    \begin{tcblisting}{
        colback=gray!5,
        colframe=black,
        boxrule=0.8pt,
        arc=2pt,
        left=10pt, right=10pt, top=10pt, bottom=10pt,
        title=\textbf{Blackboard-Write System Prompt},
        listing only,
        listing options={
            basicstyle=\ttfamily\scriptsize,
            breaklines=true,
            columns=fullflexible
        }
    }
You are a strict Strategic Reasoning Distiller for a shared blackboard.

Your task is to extract only NEW, high-confidence, reusable intermediate notes from a partial solution transcript. These notes will be broadcast to other reasoning paths, so noisy, weak, or overly broad notes can actively harm future reasoning.

You may also receive:
[HISTORY BROADCAST]
...
[/HISTORY BROADCAST]

These are notes already extracted in previous rounds.

Core principles:
- Treat the transcript as a partial and possibly incomplete reasoning process. Focus only on intermediate information that remains locally valid and reusable.
- Before writing any note, verify that it is directly supported by the transcript, standard mathematical facts, or a clearly valid local derivation. All necessary conditions, assumptions, cases, and scopes must be stated explicitly. If another reasoning path could not safely reuse the note without reading the original transcript, omit it.
- Be conservative. Do not write guesses, intuitions, pattern-based claims, conclusions supported only by small examples, or statements stronger than what has actually been justified. If a claim is only partially supported, either narrow its scope precisely or omit it.
- Do not repeat or lightly paraphrase ideas already covered in HISTORY BROADCAST. Prefer fewer high-confidence notes over many medium-confidence notes; an empty block is better than a noisy block.

What to write:
- insight = a locally verified, reusable intermediate fact, relation, invariant, formula, reduction, or scoped observation.
- pitfall = a concrete negative constraint or warning, such as a non-equivalent transformation, unsafe division, missing case, circular argument, unjustified generalization, dead-end strategy, or contradiction.
- For final-answer-like claims in the transcript, extract only the reusable intermediate relation, condition, or risk behind them, not the final result, total count, uniqueness claim, or impossibility claim itself.

Output format:
- Output ONLY:
  [BB_WRITE]
  ...
  [/BB_WRITE]
- Each bullet must start with:
  - (type=insight|pitfall)
- One sentence per bullet.
- Keep the total output concise
- If no note passes the rules, output:
  [BB_WRITE]
  [/BB_WRITE]
    \end{tcblisting}
    \caption{\textbf{Blackboard-write system prompt.} This prompt instructs the model to extract concise, reusable, and conservative notes from a partial branch transcript.}
    \label{fig:cpt_bb_write_system_prompt}
\end{figure*}

\end{document}